\documentclass[accepted]{uai2026} 
                        
 \usepackage[british]{babel}

\usepackage{natbib} 
\usepackage{mathtools} 
\usepackage{booktabs} 
\usepackage{xcolor}
\usepackage{tikz} 
\usetikzlibrary{arrows.meta, calc, positioning}

\usepackage{bbm}
\usepackage{tcolorbox}
\usepackage{amsmath}
\usepackage{amssymb}
\usepackage{amsthm}
\usepackage{enumitem}
\usepackage{siunitx}
\usepackage{pifont}
\usepackage{algorithm}
\usepackage{algpseudocode}
\usepackage{colortbl}
\usepackage{bm}
\usepackage{titletoc} 
\usepackage{stackengine,scalerel}
\usepackage{tabularx,ragged2e}
\usepackage{multirow}
\newcolumntype{Y}{>{\RaggedRight\arraybackslash}X}
 \usepackage{float}  

\definecolor{linegrey}{RGB}{100, 100, 100}
\definecolor{blue}{RGB}{0, 0, 128}
\definecolor{red}{RGB}{180, 20, 20} 
 
\newcommand{\R}{\mathbb{R}}

\newcommand{\EE}{\mathbb{E}}       

\newcommand{\cF}{\mathcal{F}}      
\newcommand{\cX}{\mathcal{X}}      
\newcommand{\cM}{\mathcal{M}}      
\newcommand{\cY}{\mathcal{Y}}      
\newcommand{\cV}{\mathcal{V}}      

\newcommand{\cD}{\mathcal{D}}

\newcommand{\cQ}{\mathcal{Q}}
\newcommand{\cE}{\mathcal{E}}

\newcommand{\ksimplex}{\Delta_{K-1}}

\newcommand{\dy}{\delta_{y}} 
\newcommand{\on}[1]{\operatorname{#1}}

\newcommand{\AU}{\on{AU}_\cF}
\newcommand{\EU}{\on{EU}_\cF}

\DeclareMathOperator*{\argmax}{arg\,max}

\newcommand{\dF}[1]{d_\cF\!\left( #1 \right)}

\newcommand{\dataset}[1]{\textsc{\lowercase{\mbox{#1}}}}
\newcommand{\datasetlabel}[1]{%
  \fcolorbox{black!60}{white}{\strut\bfseries #1}%
}

 \newcommand{\abox}[1]{%
  \tikz[baseline=(X.base)]\node
    (X)
    [draw,
     rounded corners=0.6pt,
     line width=0.3pt,
     inner xsep=1.6pt,
     inner ysep=0.1pt]
    {\strut #1};%
}

\newcommand{\Cint}{\ThisStyle{\ensurestackMath{%
  \stackinset{c}{.25\LMpt}{c}{.5\LMpt}{\SavedStyle\mathrm{c}}{\SavedStyle\int}}}}

\newcommand{\dd}{\mathrm{d}}
    
\theoremstyle{plain}
\newtheorem{theorem}{Theorem}[section]
\newtheorem{proposition}[theorem]{Proposition}

\newtheorem{corollary}[theorem]{Corollary}
\theoremstyle{definition}

\theoremstyle{remark}
\newtheorem{remark}[theorem]{Remark}
 
\title{Quantification of Credal Uncertainty: A Distance-Based Approach}

\author[1]{\href{mailto:<xabier.gonzalez@unavarra.es>?Subject=UAI 2026 paper}{Xabier González-García}{}}
\author[2]{Siu Lun Chau}
\author[3,4]{Julian Rodemann}
\author[5,6]{Michele Caprio}
\author[3]{Krikamol Muandet}
\author[1]{Humberto Bustince}
\author[7]{Sébastien Destercke}
\author[8,9,10]{Eyke Hüllermeier}
\author[8,9]{Yusuf Sale}

\affil[1$\bullet$2]{%
Dept.\ of Statistics, CS \& Mathematics,
Public Univ.\ of Navarre $\bullet$
EPIC Lab,
Nanyang Technological Univ. Singapore
}

\affil[3$\bullet$4]{%
Rational Intelligence Lab, CISPA Helmholtz Center for Information Security   $\bullet$ 
Dept.\ of Statistics, LMU Munich
  }
\affil[5$\bullet$6]{%
Dept.\ of Computer Science, The University of Manchester  $\bullet$ 
Manchester Centre for AI Fundamentals
  }
\affil[7$\bullet$8]{%
Universit\'{e} de Technologie de Compi\`{e}gne, CNRS, Heudiasyc $\bullet$ 
Institute of Informatics, LMU Munich  
  }
\affil[9$\bullet$10]{%
Munich Center for Machine Learning (MCML)  $\bullet$ 
German Research Center for Artificial Intelligence (DFKI, DSA)
  }

\begin{document}
    \maketitle   

\begin{abstract}
Credal sets, i.e., closed convex sets of probability measures, provide a natural framework to represent aleatoric and epistemic uncertainty in machine learning. Yet how to \emph{quantify} these two types of uncertainty for a given credal set, particularly in multiclass classification, remains underexplored.
In this paper, we propose a distance-based approach to quantify total, aleatoric, and epistemic uncertainty for credal sets.
Concretely, we introduce a family of such measures within the framework of Integral Probability Metrics (IPMs).
The resulting quantities admit clear semantic interpretations, satisfy natural theoretical desiderata, and remain computationally tractable for common choices of IPMs.
We instantiate the framework with the total variation distance and obtain simple, efficient uncertainty measures for multiclass classification. In the binary case, this choice recovers established uncertainty measures, for which a principled multiclass generalization has so far been missing.
Empirical results confirm the practical usefulness of the proposed measures,
showing competitive performance at low computational cost.
\end{abstract}  

\section{Introduction}\label{sec:introduction}
\begin{figure*}[hbt!]  
    \centering
    \includegraphics[width=\textwidth]{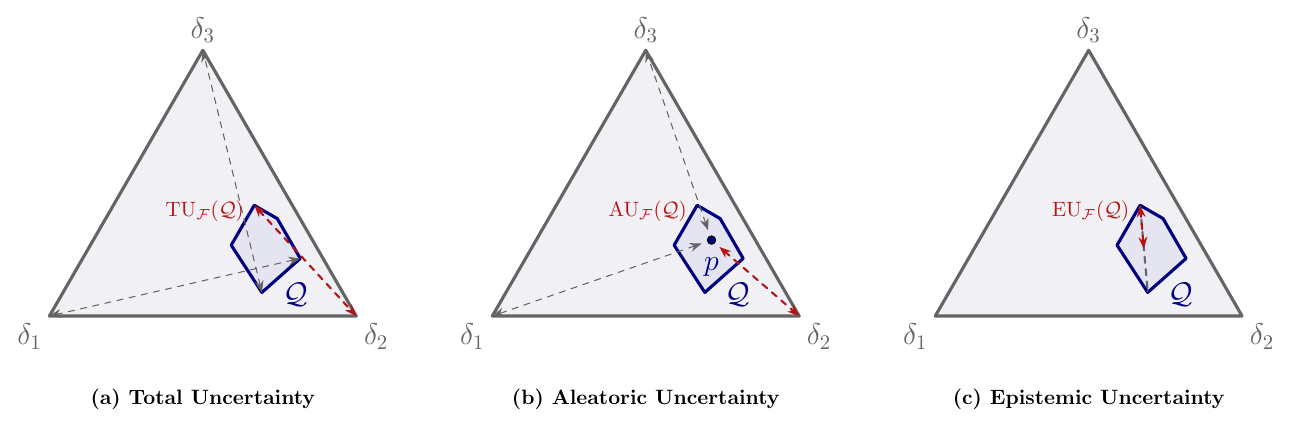}
    \caption{Geometric illustration of the proposed distance-based framework on the simplex $\Delta_{K-1}$ ($K=3$). Dirac measures \textcolor{linegrey}{$\{\delta_1,\delta_2,\delta_3\}$} represent full certainty.
\textbf{(a) Total uncertainty:} distance of $\cQ$ to full certainty, measured as the worst-case distance to the nearest vertex \textcolor{linegrey}{$\delta_y$}.
\textbf{(b) Aleatoric uncertainty:} the distance of a precise predictive distribution $p$ from full certainty, measured by its proximity to the nearest vertex. For a credal prediction $\cQ$, aleatoric uncertainty is set-valued, giving a range over $p\in\cQ$.
\textbf{(c) Epistemic uncertainty:} the imprecision of the credal set \(\cQ\), quantified as half its maximal diameter, i.e., the largest distance between any two distributions in \(\cQ\). 
Distances (\textcolor{red}{\(\longleftrightarrow\)}) are defined via IPMs over $\cF$.}
\label{fig:framework}
 
\end{figure*}
The growing impact of machine learning in societal and scientific applications has intensified the need for predictive models that are not only accurate but also uncertainty-aware. Central to this discussion is the distinction between \emph{aleatoric} and \emph{epistemic} uncertainty \citep{hullermeier2021aleatoric, sale2026aleatoric}. The former refers to variability that is inherent in the data-generating process and hence irreducible, whereas the latter reflects limited knowledge about the predictive mechanism.
This distinction is also operationally important, as only epistemic uncertainty is reducible\,---\,for instance, by collecting more data or improving the model.

Accordingly, a broad range of methods has been developed to learn predictive models that can represent \emph{both} types of uncertainty, often by means of higher-order formalisms such as second-order distributions~\citep{gal2016dropout, lakshminarayanan2017simple, malinin2018predictive, sensoy2018evidential}, or through the framework of \textit{imprecise probability}~\citep{walley1991statistical, augustin2014introduction}, which extends classical probability theory beyond the Kolmogorov axioms.
\emph{Credal sets}, i.e., (convex) sets of probability measures~\citep{levi1980enterprise, walley1991statistical}, offer a natural and intuitive alternative for machine learning~\citep[cf.][]{caprio2024credalb, wang2025credal}: randomness within each probability measure captures aleatoric uncertainty, while the extent of the set itself reflects epistemic uncertainty.

In this paper, we address the complementary task of \emph{(credal) uncertainty quantification}: given a prediction in the form of a credal set, we seek summaries (\emph{viz.} uncertainty measures) that capture the extent of predictive uncertainty. 
A central and nontrivial challenge is \emph{disaggregation}, i.e., how to decompose (total) uncertainty into its aleatoric and epistemic components. 
The literature offers numerous measures of total, aleatoric, and epistemic uncertainty for credal sets  \citep[e.g.][]{abellan2000non,abellan2005difference,abellan2006disaggregated}, but most proposals either lack a clear semantic interpretation of the decomposed components — often relying on an assumed additive decomposition — or are computationally too costly for use in large-scale machine learning pipelines.
In this spirit, \cite{hullermeier2022quantification} provide a critical discussion of uncertainty measures and highlight several pitfalls that arise when such notions are used for credal uncertainty quantification, especially in machine learning settings.
Moreover, they propose measures that satisfy natural axiomatic desiderata and whose theoretical expectations align well with empirical behavior, for instance in selective prediction experiments. 
A key limitation, however, is that their treatment is restricted to the binary case, and a principled extension to multiclass classification has so far been missing.
More generally, credal uncertainty quantification in the multiclass setting has received comparatively little attention in machine learning, arguably due to additional conceptual and computational challenges \citep{sale2023volume}. This gap motivates the present work.

\emph{Contributions.} 
We present a principled framework for multiclass credal uncertainty quantification that yields a family of measures of total, aleatoric, and epistemic uncertainty with clear semantics and strong computational properties (Figure~\ref{fig:framework}).
Our approach is distance-based, partly inspired by \citet{sale2024second}, but formulated for credal sets rather than second-order distributions, and grounded in Integral Probability Metrics (IPMs)~\citep{muller1997integral}.
IPMs encompass many widely used distances between probability measures, including total variation, Wasserstein distance, and maximum mean discrepancy, and let practitioners pick the one best suited to the task at hand.
We instantiate the framework with the total variation distance, which yields particularly simple and efficient measures for multiclass classification. In the binary case, this choice recovers the uncertainty measures of \citet{hullermeier2022quantification}. 
Empirical results on standard benchmarks show that the resulting measures are competitive with existing baselines, thereby making them well tailored to machine learning needs.

\emph{Paper organization.}
In Section~\ref{sec:prelim}, we introduce the learning setting and notation. 
In Section~\ref{sec:uq}, we review existing approaches to credal uncertainty quantification, discuss their limitations, and position our proposal in relation to recent work. 
In Section~\ref{sec:novel}, we present our distance-based framework and derive a novel family of measures for total, aleatoric, and epistemic uncertainty. 
In Section~\ref{sec:exp}, we report empirical results, and in Section~\ref{sec:conlc}, we conclude.
Complementary results can be found in the supplementary material.

\section{Preliminaries}\label{sec:prelim}
We consider supervised classification with instance space $\cX$, finite label set $\cY=\{1,\dots,K\}$, and i.i.d.\ training data $\cD_\text{train}=\{(x^{(n)},y^{(n)})\}_{n=1}^N$ drawn from an (unknown) distribution on $\cX\times\cY$.
A probabilistic predictor is a mapping $x \mapsto p_x \in \ksimplex$, where $\ksimplex$ is the probability simplex over $\cY$; thus, each $p_x$ is a probability measure on $(\cY, 2^{\cY})$. Our focus is predictive uncertainty at a query $x_q$, i.e., uncertainty about the predicted label $\hat y_q$. A single probabilistic predictor naturally accounts for aleatoric uncertainty, but leaves uncertainty about the ground truth data-generating process itself implicit.
To make this explicit, we move from point predictions in $\ksimplex$ to \emph{set-valued} predictions: at a query $x_q$, the predictor returns a \emph{credal set} $\cQ_{x_q}\in\cV(\ksimplex)$, where $\cV(\ksimplex)$ denotes the family of nonempty closed convex subsets of the simplex. We drop the input subscript when the context is clear, writing simply $p$ and $\cQ$.
Given a credal set $\cQ$ and $A\subseteq\cY$, its lower and upper probabilities are
\begin{align*}
\underline{p}_{\cQ}(A) = \inf_{p \in \cQ} p(A),
\qquad
\overline{p}_{\cQ}(A)= \sup_{p \in \cQ} p(A),
\end{align*}
which summarize the range of plausible probabilities assigned to $A$ \citep{walley1991statistical}.
Often, we consider \emph{finitely generated} credal predictions, i.e., sets of the form 
$
\cQ = \mathrm{conv}(\{p^{(1)},\dots,p^{(M)}\}),
$
where $\{p^{(m)}\}_{m=1}^M$ is a finite collection of probabilistic models, and $\mathrm{conv}(\cdot)$ denotes the convex hull.
For finitely generated credal sets, lower and upper probabilities can be expressed directly in terms of the generators: since $p\mapsto p(A)$ is linear, the extrema over $\cQ$ are attained at extreme points, hence (writing $[M]:=\{1,\dots,M\}$)
\begin{align*}
\underline{p}_{\cQ}(A) = \min_{m\in[M]} p^{(m)}(A),
\qquad
\overline{p}_{\cQ}(A) = \max_{m\in[M]} p^{(m)}(A).
\end{align*}

\section{Uncertainty Quantification}\label{sec:uq}
Recent critiques of uncertainty quantification methods have highlighted that some widely used approaches fail to satisfy basic properties one would reasonably expect \citep{wimmer2023quantifying}.
An axiomatic perspective makes such failures visible by specifying the
qualitative behavior of a measure.
Following \citet{abellan2005additivity, jirouvsek2018new}, we consider a generic measure $f:\mathcal{V}(\ksimplex)\to\mathbb{R}$ that maps a credal set $\cQ$ to a scalar, and we recall a collection of common axioms for uncertainty measures:
\begin{itemize}[leftmargin=2.2em, labelsep=0.6em]
    \item[\abox{A1}] Boundedness: There exists $C>0$ such that we have $f(\cQ)\in[0,C]$ for all credal sets $\cQ$.
    \item[\abox{A2}] Continuity: The functional $f$ is continuous\footnote{Throughout,  \(\mathcal V(\Delta_{K-1})\) is endowed with the Hausdorff topology induced  by any metric generating weak convergence on \(\Delta_{K-1}\). Since  \(\mathcal Y\) is finite, this is equivalently the Hausdorff topology induced  by the Euclidean metric on the simplex.}.
    \item[\abox{A3}] Monotonicity: If $\cQ \subseteq \cM$, then $f(\cQ) \leq f(\cM)$.
    \item[\abox{A4}] Probability consistency: If $\cQ = \{p\}$ is precise, then $f(\cQ)$ reduces to a functional of the single distribution $p$, i.e., the measure carries no residual set-level structure when the credal set is a singleton.
\end{itemize}
These axioms are regularity and consistency requirements. In particular, monotonicity and probability consistency formalize that enlarging the set of plausible distributions should not reduce uncertainty and that the framework agrees with the precise case. 
Moreover, machine learning applications often impose computational constraints, especially in multiclass settings with many labels.
This motivates complementing the above desiderata with an additional requirement.
\begin{itemize}[leftmargin=2.2em, labelsep=0.6em]
    \item[\abox{A5}] Extreme-point characterization: Let $\operatorname{ext}(\cQ)$ denote the  set of extreme points of $\cQ$. A scalar uncertainty functional is called  extreme-point evaluable if its value can be computed by optimizing only  over $\operatorname{ext}(\cQ)$, rather than over all of $\cQ$.
\end{itemize}
Axiom A5 formalizes an invariance principle with respect to convexification. 
When a credal set is specified via a set of generators and then closed under convex combinations, A5 ensures that the uncertainty measure depends only on the extreme points and is unaffected by the particular interior representation of $\cQ$.
This property can substantially simplify computation by reducing evaluation of $f(\cQ)$ to $\mathrm{ext}(\cQ)$, for example when $\cQ$ is induced by a finite collection of predictive models such as an ensemble of probabilistic classifiers.

A1--A5 together form a set of desiderata for scalar credal uncertainty  functionals in machine learning. 
Such an axiomatic perspective is useful because empirical evaluation of uncertainty is necessarily indirect: before assessing a measure in downstream tasks, it clarifies which qualitative behavior the measure can be expected to exhibit.
The desiderata apply directly to scalar total and  epistemic uncertainty measures. Aleatoric uncertainty is different in our  framework, because its primary quantification is set-valued; scalar  aleatoric scores arise only after applying a summary operator. Consequently, the axioms satisfied by an aleatoric summary  depend on this additional choice. We therefore state explicitly, for each  proposed quantity and summary, which of A1--A5 are satisfied.
We deliberately do not promote additional, more specialized properties to the status of formal axioms. Additional useful properties are mentioned in context throughout the paper, but are not enforced as basic desiderata.
The literature also considers additional axioms such as subadditivity on joint credal sets and additivity under (imprecise) probabilistic independence \citep{abellan2005additivity, bronevich2008axioms}. These properties concern uncertainty under composition and depend on notions of (imprecise) independence. In this work, we focus on predictive uncertainty quantification and therefore do not consider such joint-set conditions as basic requirements.

\subsection{Existing Measures}
In the following, we briefly review prominent existing approaches to credal uncertainty quantification. We focus on measures most commonly used in machine learning practice. For each family, we highlight key limitations that motivate our distance-based framework. For additional critical discussion, see e.g. \citet{hullermeier2022quantification, sale2023volume}.
\subsubsection{Entropy Measures}
Arguably the most prominent measure of uncertainty for a (precise) probability measure $p \in \ksimplex$ is the Shannon entropy 
\citep{shannon1948mathematical}, 
$S(p) = -\sum_{y \in \mathcal{Y}} p(y) \log_2 p(y).$
Entropy is minimal when $p$ assigns all probability mass to a single label and maximal when $p$ is the uniform distribution on $\cY$, while also enjoying well-known axiomatic and operational justifications \citep{csiszar2008axiomatic}.
Shannon entropy can be lifted from precise distributions to credal sets by taking its upper and lower envelopes over $\cQ$ \citep{abellan2005difference, abellan2006disaggregated}. 
Concretely, define
\begin{align*}
S^{\ast}(\cQ) \coloneqq \sup_{p\in\cQ} S(p),
\qquad
S_{\ast}(\cQ) \coloneqq \inf_{p\in\cQ} S(p).
\end{align*}
The upper envelope $S^{\ast}(\cQ)$ is commonly interpreted as a measure of total uncertainty, while the lower envelope $S_{\ast}(\cQ)$ captures a notion of aleatoric uncertainty. Assuming an additive decomposition, epistemic uncertainty is then quantified by the residual gap $S^{\ast}(\cQ) - S_{\ast}(\cQ).$

\emph{Criticism.} The upper envelope $S^{\ast}(\cQ)$ is known to satisfy the classical axioms (A1)--(A4) under standard assumptions. 
From a machine learning perspective, however, its practical use is limited, since computing $S^{\ast}(\cQ)$ has no closed-form solution and requires iterative solvers at prediction time. 
Moreover, apart from falling off as a ``residual'' from the additive decomposition of total uncertainty, $S^{\ast}(\cQ)-S_{\ast}(\cQ)$ is lacking a theoretical (axiomatic) grounding and clear semantic interpretation \citep{hullermeier2022quantification}. 

\subsubsection{Hartley Measures}
In contrast, set-theoretic approaches model uncertainty by defining a subset $A \subseteq \cY$ of possible outcomes, distinguishing between what is plausible and what is ruled out.
A standard way to quantify the uncertainty associated to the set $A$ is Hartley’s non-specificity \citep{hartley1928transmission}, $H(A) = \log_2 |A|$, which depends only on how many outcomes remain possible.
To lift Hartley-type non-specificity to credal sets, one can combine $\log_2|A|$ with a set function derived from the lower probability $\underline{p}_{\cQ}$ via a Möbius transform \citep{abellan2000non}. Concretely, define 
\begin{align*}
GH(\cQ) := \sum_{\emptyset\neq A\subseteq\mathcal Y} \log_2 |A|\,m_Q(A), 
\end{align*}
where $m_Q(A):= \sum_{B\subseteq A}(-1)^{|A\setminus B|}\underline p_Q(B)$ is the Möbius inverse of $\underline p_Q$. The empty set is omitted from the sum, since Hartley's non-specificity is defined for nonempty sets and $m_Q(\emptyset)=\underline p_Q(\emptyset)=0$.
In the literature, $\on{GH}(\cQ)$ is commonly interpreted as a measure of epistemic uncertainty.
Total uncertainty can again be quantified by $S^\ast(\cQ)$. Assuming an additive decomposition, one then defines the corresponding aleatoric component as the residual $S^{\ast}(\cQ) - \on{GH}(\cQ)$.

\emph{Criticism.} The quantity $\on{GH}(\cQ)$ captures a notion of \emph{imprecision} by quantifying, in a set-theoretic manner, how many alternatives remain plausible, largely independent of the shape of the distributions in $\cQ$. 
It vanishes for precise predictions ($\cQ=\{p\}$) and satisfies several desirable axiomatic properties in the imprecise probability literature \citep{abellan2000non}. 
From a machine learning perspective, however, its practical use is limited by unfavorable scaling. Evaluation requires summation over the power set $2^{\cY}$ and thus becomes prohibitive even for moderate $K$.
Moreover, the associated aleatoric component defined via the residual $S^{\ast}(\cQ)- \on{GH}(\cQ)$ inherits the general drawback of additive disaggregation schemes, namely that it is specified by subtraction rather than by an independent semantic principle.

\subsubsection{Other Approaches}
Recently, \emph{distance-based} notions of uncertainty \citep[e.g.][]{sale2024second, apostolopoulou2024rate, chau2025integral} have attracted attention in the literature.
For instance, \citet{sale2024second} develop uncertainty measures for second-order distributions by comparing a distribution-over-distributions to a reference notion of certainty via probability metrics. Our work is aligned with this perspective, but is formulated for credal predictions and thus conceptually extends the distance-based idea to credal sets.

In a related line of work, \citet{chau2025integral} study extensions of IPMs~\citep{muller1997integral} to imprecise models.
An IPM is a discrepancy between probability measures defined through a class $\cF$ of measurable test functions (typically bounded, and often chosen to enforce additional regularity such as Lipschitz continuity). 
For two probability measures $p$ and $q$ on a measurable space $(\cY,\mathcal{A})$, the IPM is
\begin{align*} 
d_{\cF}(p,q)
\; \coloneqq\;
\sup_{f \in \cF}\Bigl|\; \int f\,\dd p \;-\; \int f\,\dd q \;\Bigr|.
\end{align*}
Intuitively, $d_{\cF}(p,q)$ measures how well $p$ and $q$ can be distinguished by tests in $\cF$, namely by the largest discrepancy in expected function values.
Many standard distances are special cases of IPMs, obtained by choices of the test-function class $\cF$.
To extend IPM-type discrepancies beyond precise probabilities, \citet{chau2025integral} replace the Lebesgue integral by the Choquet integral, which integrates a function with respect to a capacity (a non-additive set function). This yields a class of Integral Imprecise Probability Metrics (IIPMs). Within this framework, they propose the Maximum Mean Imprecision (MMI) to quantify epistemic uncertainty. While MMI is formulated for capacities, it can be adapted to credal sets by evaluating induced lower and upper probabilities. Concretely, one considers 
\begin{align*}
d_{\cF}\!\bigl(\underline p_{\cQ},\,\overline p_{\cQ}\bigr)
\;\coloneqq\;
\sup_{f\in\cF}\Bigl(
\Cint f\,\dd \overline p_{\cQ}
\;-\;
\Cint f\,\dd \underline p_{\cQ}
\Bigr),
\end{align*}
where $\Cint f\,\dd\nu$ denotes the Choquet integral of $f$ with respect to the set function $\nu$ \citep{choquet1954theory, denneberg1994non}.

\emph{Criticism.} 
First, passing from a credal set $\cQ$ to its lower and upper probabilities $(\underline p_{\cQ},\overline p_{\cQ})$ is not injective, since distinct credal sets can induce the same envelopes, so envelope-based quantities may fail to distinguish different credal sets.
Second, evaluating Choquet-integral-based discrepancies for general set functions can be computationally demanding. For instance, \citet{chau2025integral} propose computable upper bounds to mitigate this intractability for the total variation instantiation. 
Recently, \citet{chau2026quantifying} extend MMI to plausibility measures and derive closed-form expressions computable in linear time. However, this result applies to plausibility measures, a restricted subclass of upper probabilities, and thus does not cover the general credal set setting.

Alternative notions of total variation for non-additive set functions have also been proposed recently in the imprecise-probability literature \citep{nieto2025total, nieto2025imprecise,vskulj2013coefficients}. Additional discussion of related work is provided in \autoref{app:related}.

\section{A Family of Novel Measures}\label{sec:novel}
As we highlighted in the preceding section, current approaches to credal uncertainty quantification have limitations.
They often rely on additive decompositions, so the resulting components lack direct semantic meaning, or scale poorly.  
In this section, we propose a distance-based framework for credal uncertainty quantification (see Figure~\ref{fig:framework}).

\paragraph{Total uncertainty.}
We quantify total uncertainty as the distance of a credal set $\cQ$ from full certainty. 
Fully certain predictions correspond to the vertices of the probability simplex, i.e., the Dirac measures
\begin{align*}
\cE(\ksimplex) = \{\,\delta_y : y \in \cY,\ \delta_y(A)=\mathbf{1}_{\{y\}}(A)\,\},
\end{align*}
each representing complete certainty about a single class.
Since proximity to one vertex implies distance from the others, we take the nearest Dirac measure as the \emph{reference point} of zero uncertainty.
Accordingly, we define total uncertainty as the minimal point-to-set (Hausdorff-type) worst-case distance between $\cQ$ and the set of Dirac measures:
\begin{equation}\label{eq:tu}
\on{TU}_\cF(\cQ) \;:=\; \inf_{y \in \cY}\, \sup_{p \in \cQ} \dF{p,\dy}
\end{equation}
The worst-case choice ensures that no other source of uncertainty exceeds the \emph{total} uncertainty.
\paragraph{Aleatoric uncertainty.}
For credal predictions $\cQ$, aleatoric uncertainty is generally not uniquely determined, since different $p \in \cQ$ may exhibit different levels of randomness. 
Consequently, any scalar quantification of aleatoric uncertainty for $\cQ$ reflects an additional modeling (or summarization) choice and necessarily loses information. 
Yet, scalar summaries are often needed in downstream applications that require a total ordering or thresholding of predictions.

We first consider the precise setting and then lift this notion to credal predictions.
For a single predictive distribution $p$, aleatoric uncertainty quantifies its intrinsic randomness and is obtained as the distance to full certainty, i.e., to the nearest Dirac measure. Equivalently, this is the special case of \eqref{eq:tu} in which the credal set reduces to $\{p\}$:
\begin{equation}\label{eq:au}
\AU(p) := \inf_{y \in \cY} \dF{p,\dy}.
\end{equation}

Rather than forcing a scalar summary, we therefore define aleatoric uncertainty on the quantification level as the \emph{set} of all plausible randomness values induced by elements of $\cQ$:
\begin{equation}\label{eq:AUQ}
\AU(\cQ) \;:=\; \bigl\{ \AU(p) \;:\; p \in \cQ \bigr\}.
\end{equation}
This set-valued quantification is not merely a formal convenience, but a principled consequence of the credal representation.
Indeed, any direct aggregation of \eqref{eq:AUQ} into a single scalar implicitly selects or combines elements of $\cQ$ and thereby resolves part of the uncertainty that the credal set is meant to represent. 

To make the set-valued quantification operational in downstream tasks, we explicitly separate the quantification level from the summary level. 
Let $\mathfrak{A}$ denote the class of all aleatoric uncertainty sets attainable from credal predictions, i.e.,
\[
\mathfrak{A} \;\coloneqq\; \bigl\{ \AU(\cQ) \;:\; \cQ \subseteq \ksimplex  \bigr\}.
\]
A (task-dependent) summary operator is a mapping
\begin{equation*}
\Phi:\mathfrak{A}\to\mathcal{Z},
\end{equation*}
where $\mathcal{Z}$ is an application-specific output space (e.g., $\R$, $\R^2$, or another ordered space). 
Given such a $\Phi$, we define the corresponding actionable aleatoric summary by
\begin{equation*}
\AU^\Phi(\cQ) \;\coloneqq\; \Phi\bigl(\AU(\cQ)\bigr).
\end{equation*}
This formalizes that any scalar (or vector-valued) aleatoric score used in practice is not part of quantification itself, but results from an additional summarization choice.

A summary operator $\Phi$ is called \emph{lossless} (on $\mathfrak{A}$) if it is injective, i.e., if
\begin{equation*}
\Phi(A)=\Phi(B) \;\Longrightarrow\; A=B
\qquad\text{for all } A,B\in\mathfrak{A}.
\end{equation*}
In that case, $\Phi(\AU(\cQ))$ retains the full information contained in the set-valued aleatoric uncertainty $\AU(\cQ)$.

A canonical choice in our setting is the endpoint summary
\begin{equation*}
\Phi_{\mathrm{int}}(A) \;\coloneqq\; \bigl(\inf A,\sup A\bigr),
\end{equation*}
which yields $\on{AU}_{\cF}^{\Phi_{\mathrm{int}}}(\cQ) \;=\; \bigl(\underline{\on{AU}}_{\cF}(\cQ),\,\overline{\on{AU}}_{\cF}(\cQ)\bigr)$, where
\begin{equation*}
\underline{\on{AU}}_{\cF}(\cQ)\coloneqq \inf_{p\in\cQ}\on{AU}_{\cF}(p),
\quad
\overline{\on{AU}}_{\cF}(\cQ)\coloneqq \sup_{p\in\cQ}\on{AU}_{\cF}(p).
\end{equation*}
Under mild conditions (e.g., if $\cQ$ is convex and compact and $p\mapsto \AU(p)$ is continuous), the image $\AU(\cQ)\subseteq\R_{\ge 0}$ is a compact interval, so that
\begin{equation*}
\AU(\cQ)=\bigl[\underline{\on{AU}}_{\cF}(\cQ),\,\overline{\on{AU}}_{\cF}(\cQ)\bigr].
\end{equation*}
Hence, in this case, the endpoint summary is lossless and provides an exact finite-dimensional representation of the set-valued aleatoric uncertainty.

When a downstream task requires a total ordering (as in selective prediction), an ordering rule can be imposed on top of the endpoint summary (e.g., lexicographically), without changing the underlying set-valued aleatoric quantification.

\paragraph{Epistemic uncertainty.}
Our aim is to quantify the imprecision represented by a credal set. Intuitively, a larger credal set corresponds to greater ignorance about the data-generating mechanism. Although quantities like the volume of a credal set have been considered, they show limited effectiveness in multiclass classification tasks \citep{sale2023volume}.
We instead quantify epistemic uncertainty via the (half) maximal pairwise distance within the credal set, i.e., via half its maximal diameter \citep{chambers2007degree, stewart2022distention, caprio2024credal}:
\begin{equation}\label{eq:eu}
\begin{aligned}
\EU(\cQ)
&:= \frac{1}{2}\sup_{p,q \in \cQ} \, d_{\cF}(p,q) \\
&= \frac{1}{2}\sup_{f \in \cF}
\Bigl(
\sup_{p \in \cQ} \EE_p[f]
-
\inf_{q \in \cQ} \EE_q[f]
\Bigr).
\end{aligned}
\end{equation}
In contrast to volume-based notions, the diameter depends only on the maximal discrepancy among plausible predictive distributions, not on the overall size of the set.
A large diameter means that $\cQ$ contains at least two distributions with strongly conflicting predictions.
This disagreement captures ignorance about the ground truth and is independent of the intrinsic randomness of the individual elements of $\cQ$ (cf.\ Proposition~\ref{prop:eushift}).
The factor $\tfrac{1}{2}$ provides a convenient normalization and, in particular, ensures that epistemic uncertainty is bounded by total uncertainty (cf. Proposition~\ref{prop:tudominance}).

\subsection{Theoretical Properties}\label{sec:properties}
Since the axioms introduced earlier are formulated for scalar-valued functionals on credal sets, they apply directly to total and epistemic uncertainty.
By contrast, aleatoric uncertainty is quantified by the set-valued object~\eqref{eq:AUQ}.
To connect it to the axiomatic framework, we therefore consider scalar summaries of~\eqref{eq:AUQ}, focusing in particular on the canonical endpoint summaries $\underline{\on{AU}}_{\cF}(\cQ)$ and $\overline{\on{AU}}_{\cF}(\cQ)$. More generally, however, which axioms hold for the resulting aleatoric summary $\AU^\Phi(\cQ)$ depends on the choice of~$\Phi$.
\begin{remark}\label{rem:assumptions}
The properties established below rely on two standing assumptions on the function class~$\cF$:
First, we assume that $\cF$ is rich enough to separate probability measures, so that the IPM $d_{\cF}$ defines a metric on~$\ksimplex$ (rather than only a pseudo-metric).
For example, if $\cF=\cF_k$ is the unit ball of a reproducing kernel
Hilbert space with characteristic kernel $k$, then the corresponding maximum mean discrepancy is a metric on a suitable class of probability distributions; see, e.g., \citet[Sec.~3.3.1 and Sec.~3.5]{Muandet2017}.
Second, we assume that $\cF$ is a uniform class with respect to weak convergence~\citep{muller1997integral, rachev1991probability}, i.e., $\cF$ is equicontinuous and has uniformly bounded span,
\begin{equation*}
\sup_{f\in\cF}\bigl(\sup f-\inf f\bigr) < \infty .
\end{equation*}
By \citet[Theorem~4.3]{muller1997integral}, this ensures good continuity properties of the induced IPM.
On a finite label space~$\cY$, equicontinuity is automatic.
\end{remark}
\begin{proposition}\label{prop:properties}
Let $\cF$ be a uniform class with respect to weak convergence such that $d_{\cF}$ is a metric on~$\ksimplex$. Then $\on{TU}_\cF(\cQ)$ and $\on{EU}_\cF(\cQ)$ satisfy A1--A5; $\underline{\on{AU}}_{\cF}(\cQ)$ satisfies A1, A2, and A4; and $\overline{\on{AU}}_{\cF}(\cQ)$ satisfies A1--A4. 
\end{proposition}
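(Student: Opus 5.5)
The plan is to prove the axioms one at a time, using three facts. First, $d_\cF$ is a metric on the compact set $\ksimplex$. Second, because $\cF$ is a uniform class, $d_\cF$ metrizes weak convergence (\citet[Theorem~4.3]{muller1997integral}); on a finite $\cY$ this means $d_\cF$ is topologically equivalent to the Euclidean metric, so it is jointly continuous. Third, $d_\cF$ is bounded: since $|\EE_p f-\EE_q f|\le \sup f-\inf f$, we get $d_\cF\le C:=\sup_{f\in\cF}(\sup f-\inf f)<\infty$.

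\textbf{A1 and A4.} All quantities are built from $d_\cF$ by suprema, infima and the factor $\tfrac12$, so they take values in $[0,C]$, which gives A1. For a singleton $\cQ=\{p\}$ we have $\on{TU}_\cF(\{p\})=\underline{\on{AU}}_{\cF}(\{p\})=\overline{\on{AU}}_{\cF}(\{p\})=\AU(p)$ and $\EU(\{p\})=\tfrac12 d_\cF(p,p)=0$. Each is a functional of $p$ alone, which gives A4.

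\textbf{A3.} If $\cQ\subseteq\cM$, then suprema over $\cQ$ are at most suprema over $\cM$. So $\sup_{p\in\cQ}d_\cF(p,\dy)$ is monotone for each $y$, and taking $\inf_y$ preserves this; hence $\on{TU}_\cF$ is monotone. The same argument gives monotonicity of $\EU$ (supremum over $\cQ\times\cQ$) and of $\overline{\on{AU}}_{\cF}$.

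\textbf{A2.} I would use one lemma. If $g:\ksimplex\to\R$ is continuous, then $\cQ\mapsto\sup_{p\in\cQ}g(p)$ is continuous in the Hausdorff topology; the same holds for the infimum and for functions on $\ksimplex\times\ksimplex$ with $\cQ\times\cQ$. Proof: $g$ is uniformly continuous on the compact simplex with some modulus $\omega$. If $d_H(\cQ,\cM)<\eta$, every $p\in\cQ$ has some $q\in\cM$ within $\eta$, so $\sup_\cQ g\le\sup_\cM g+\omega(\eta)$, and symmetrically. Also, $d_H(\cQ\times\cQ,\cM\times\cM)$ is controlled by $d_H(\cQ,\cM)$. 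Now apply the lemma:
\begin{itemize}
\item with $g=d_\cF(\cdot,\dy)$ for each fixed $y$; the minimum over finitely many $y$ of continuous functions is continuous, so $\on{TU}_\cF$ is continuous;
\item with $g=d_\cF$ on pairs, which gives continuity of $\EU$;
\item with $g=\AU$, which is continuous as a finite minimum of continuous functions, giving continuity of both $\underline{\on{AU}}_{\cF}$ and $\overline{\on{AU}}_{\cF}$.
\end{itemize}

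\textbf{A5.} This is the main point. For fixed $f$, the map $p\mapsto\EE_p f$ is linear, so $|\EE_pf-\EE_{\dy}f|$ is convex in $p$. A supremum of convex functions is convex, so $p\mapsto d_\cF(p,\dy)$ is convex. Likewise $(p,q)\mapsto d_\cF(p,q)$ is jointly convex, and in the second expression of \eqref{eq:eu} the inner terms are suprema and infima of linear functionals. By Bauer's maximum principle, a continuous convex function on a compact convex set attains its maximum at an extreme point. Therefore
\[
\sup_{p\in\cQ}d_\cF(p,\dy)=\sup_{p\in\operatorname{ext}(\cQ)}d_\cF(p,\dy),
\]
and, applying the same principle in each argument separately,
\[
\sup_{p,q\in\cQ}d_\cF(p,q)=\sup_{p,q\in\operatorname{ext}(\cQ)}d_\cF(p,q).
\]
Hence $\on{TU}_\cF$ and $\EU$ are extreme-point evaluable.

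The remaining care is at the closure: $\operatorname{ext}(\cQ)$ need not be closed when $K>3$. I would handle this by noting that Bauer's principle (equivalently, Krein--Milman with continuity) still gives equality of the suprema over $\operatorname{ext}(\cQ)$, and that for finitely generated $\cQ$ the extreme points are finite. No A5 claim is made for the $\AU$ summaries, since $\AU$ is a minimum of convex functions and need not be convex.
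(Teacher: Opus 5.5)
Your proof is correct and follows essentially the same route as the paper. Both use boundedness and joint continuity of $d_\cF$ for A1--A2, nested suprema for A3, the singleton computation for A4, and, for A5, convexity of $p\mapsto d_\cF(p,\delta_y)$ and $(p,q)\mapsto d_\cF(p,q)$ together with Bauer's maximum principle. The differences are minor: you spell out the Hausdorff-continuity step through an explicit uniform-continuity lemma, which the paper only sketches, and you reduce the diameter to extreme points one argument at a time rather than via $\operatorname{ext}(\cQ\times\cQ)=\operatorname{ext}(\cQ)\times\operatorname{ext}(\cQ)$.
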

\begin{corollary}\label{corollary:monotone}
Under the assumptions of Proposition~4.2, let $\cQ,\cM \subseteq \ksimplex$  be credal sets with $\cQ \subseteq \cM$. Then  
\[
\on{AU}_\cF(\cQ) \subseteq \on{AU}_\cF(\cM),
\]  
where $\on{AU}_\cF(\cQ) := \{\on{AU}_\cF(p) : p \in \cQ\}$.  In particular, whenever these sets are intervals,
\[
\underline{\on{AU}}_\cF(\cM)  \le  \underline{\on{AU}}_\cF(\cQ)  \le  \overline{\on{AU}}_\cF(\cQ)  \le  \overline{\on{AU}}_\cF(\cM).
\]
\end{corollary}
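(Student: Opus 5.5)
The set inclusion follows directly from the definition of $\on{AU}_\cF(\cQ)$ as the image of $\cQ$ under $p\mapsto \on{AU}_\cF(p)$. The plan is to argue the inclusion first and then read off the endpoint inequalities as an elementary consequence of that inclusion. Neither step needs the metric or uniformity hypotheses on $\cF$. Those hypotheses serve only to guarantee that $\on{AU}_\cF(p)$ is a well-defined finite real number, which holds by A1 from Proposition~\ref{prop:properties}, and that the images are compact intervals, as noted after \eqref{eq:AUQ}.

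For the inclusion, take any $a\in\on{AU}_\cF(\cQ)$. By \eqref{eq:AUQ} there is some $p\in\cQ$ with $a=\on{AU}_\cF(p)=\inf_{y\in\cY} d_\cF(p,\delta_y)$. Since $\cQ\subseteq\cM$, this same $p$ lies in $\cM$. Hence $a=\on{AU}_\cF(p)\in\{\on{AU}_\cF(q): q\in\cM\}=\on{AU}_\cF(\cM)$. The key point is that $\on{AU}_\cF(p)$ is computed pointwise from $p$ alone and does not depend on the ambient credal set, so membership transfers verbatim.

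For the chain of inequalities, I would use the fact that for nonempty bounded sets $A\subseteq B\subseteq\R$ one has $\inf B\le\inf A\le\sup A\le\sup B$. The middle inequality holds because $A$ is nonempty, which follows from credal sets being nonempty. The outer two follow because any lower (upper) bound for $B$ is one for $A$. Applying this with $A=\on{AU}_\cF(\cQ)$ and $B=\on{AU}_\cF(\cM)$ gives the claim. This uses $\underline{\on{AU}}_\cF(\cQ)=\inf\on{AU}_\cF(\cQ)$ and $\overline{\on{AU}}_\cF(\cQ)=\sup\on{AU}_\cF(\cQ)$, which is exactly the endpoint summary $\Phi_{\mathrm{int}}$. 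When the sets are intervals, as under continuity of $p\mapsto\on{AU}_\cF(p)$ on convex compact $\cQ$ (the continuity part of A2 in Proposition~\ref{prop:properties}), these are the interval endpoints, and the inclusion is an inclusion of intervals.

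There is no real obstacle here. The only point needing care is boundedness: the infima and suprema must be finite, which follows from A1 for $\overline{\on{AU}}_\cF$ in Proposition~\ref{prop:properties}. Also worth noting is that the interval hypothesis is not actually needed for the inequalities themselves, only for interpreting them as containment of intervals. This also shows why $\overline{\on{AU}}_\cF$ satisfies A3 while $\underline{\on{AU}}_\cF$ is antitone rather than monotone.
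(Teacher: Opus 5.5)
Your proposal is correct and follows essentially the same route as the paper. Both observe that every $p\in\cQ$ lies in $\cM$, so the image sets are nested, and then read off the endpoint inequalities by taking infima and suprema over nested sets. You are also right that the interval hypothesis is not needed for the inequalities themselves.
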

Hence, the endpoint interval is monotone with respect to interval inclusion;  equivalently, the upper endpoint is monotone in the scalar sense, whereas  the lower endpoint is anti-monotone.
We next show several structural properties that further strengthen the interpretation of the proposed framework.
First, total uncertainty is no smaller than epistemic uncertainty and any element of $\AU(\cQ)$.
\begin{proposition}\label{prop:tudominance}
Let $\cQ \subseteq \ksimplex$ be a credal set. Then, for every $p \in \cQ$,
\begin{equation*}
\EU(\cQ) \leq \on{TU}_\cF(\cQ)
\quad\text{and}\quad
\AU(p) \leq \on{TU}_\cF(\cQ).
\end{equation*}
\end{proposition}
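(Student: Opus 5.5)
The two inequalities will both be reduced to elementary facts about a fixed Dirac reference point. We fix an arbitrary label $y\in\cY$ and prove each inequality with $\sup_{p\in\cQ}\dF{p,\dy}$ on the right-hand side. Since $y$ is arbitrary, taking the infimum over $y\in\cY$ then gives the statement, because the left-hand sides do not depend on $y$.

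\emph{Aleatoric part.} Fix $p\in\cQ$. By definition \eqref{eq:au}, $\AU(p)=\inf_{y'\in\cY}\dF{p,\delta_{y'}}\le \dF{p,\dy}$. Moreover, $\dF{p,\dy}\le \sup_{p'\in\cQ}\dF{p',\dy}$ because $p\in\cQ$. Taking the infimum over $y$ on the right yields $\AU(p)\le \on{TU}_\cF(\cQ)$. This is immediate.

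\emph{Epistemic part.} The only structural input is the triangle inequality, which holds for any IPM (even a pseudo-metric) directly from the definition as a supremum of absolute differences. Indeed, for every $f\in\cF$,
$\bigl|\EE_p f-\EE_q f\bigr|\le \bigl|\EE_p f-\EE_{\dy} f\bigr|+\bigl|\EE_{\dy} f-\EE_q f\bigr|$,
and we then take the supremum over $f$. For any $p,q\in\cQ$ and fixed $y$ this gives
\begin{equation*}
\dF{p,q}\le \dF{p,\dy}+\dF{\dy,q}\le 2\sup_{r\in\cQ}\dF{r,\dy}.
\end{equation*}
Taking the supremum over $p,q\in\cQ$ and multiplying by $\tfrac12$ gives $\EU(\cQ)\le \sup_{r\in\cQ}\dF{r,\dy}$ for every $y$. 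Taking the infimum over $y$ yields $\EU(\cQ)\le\on{TU}_\cF(\cQ)$. This is exactly where the normalising factor $\tfrac12$ in \eqref{eq:eu} is used.

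There is no real obstacle. The only points to state carefully are the following:
\begin{itemize}
\item the suprema may be $+\infty$ a priori, but by the bounded-span assumption of Remark~\ref{rem:assumptions} they are finite, and in any case the inequalities hold in the extended reals;
\item the infimum over $y$ is taken only after the $y$-uniform bound has been established;
\item symmetry of $d_\cF$ ($\dF{\dy,q}=\dF{q,\dy}$) holds directly from the absolute value in the definition.
\end{itemize}
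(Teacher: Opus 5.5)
Your proof is correct and is essentially the paper's argument. The aleatoric bound follows from $p\in\cQ$ by comparing $d_{\cF}(p,\delta_y)$ with $\sup_{q\in\cQ} d_{\cF}(q,\delta_y)$ and then taking the infimum over $y$; the epistemic bound follows from the triangle inequality through $\delta_y$, giving $\sup_{p,q\in\cQ} d_{\cF}(p,q)\le 2\sup_{p\in\cQ} d_{\cF}(p,\delta_y)$ before the infimum over $y$ is taken.
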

Second, epistemic uncertainty does not depend on the location of the credal set in the simplex (i.e., it is translation-invariant within~$\ksimplex$).
\begin{proposition}\label{prop:eushift}
For any two credal sets $\cQ, \cQ' \subseteq \ksimplex$ that are translates of each other within the simplex,
\[
\EU(\cQ)=\EU(\cQ').
\]
\end{proposition}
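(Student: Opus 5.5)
The plan is to make precise what ``translates of each other within the simplex'' means, and then observe that the IPM between two measures depends only on their difference. We read the hypothesis as follows: there is a vector $v\in\R^K$ with $\sum_{y}v(y)=0$ such that
\[
\cQ'=\cQ+v=\{p+v:\ p\in\cQ\}\subseteq\ksimplex .
\]
The condition $\sum_y v(y)=0$ is forced by both sets lying in the simplex. The map $\tau:p\mapsto p+v$ is then a bijection from $\cQ$ onto $\cQ'$, with inverse $q\mapsto q-v$.

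The key step is the identity $d_\cF(p+v,q+v)=d_\cF(p,q)$ for all $p,q\in\cQ$. On the finite space $\cY$, integrals are finite sums, so
\[
\EE_{p+v}[f]-\EE_{q+v}[f]=\sum_{y}f(y)\bigl(p(y)+v(y)\bigr)-\sum_y f(y)\bigl(q(y)+v(y)\bigr)=\sum_y f(y)\bigl(p(y)-q(y)\bigr)=\EE_p[f]-\EE_q[f].
\]
The $v$-terms cancel by linearity of the integral in the measure. Taking absolute values and then the supremum over $f\in\cF$ gives the identity. No boundedness or metric assumption on $\cF$ is needed here, since every quantity involved is a finite sum.

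We then conclude using the first line of \eqref{eq:eu}. Since $\tau\times\tau$ maps $\cQ\times\cQ$ bijectively onto $\cQ'\times\cQ'$ and preserves $d_\cF$, the two sets of pairwise distances coincide:
\[
\{d_\cF(p',q'):p',q'\in\cQ'\}=\{d_\cF(p,q):p,q\in\cQ\}.
\]
Hence their suprema agree, and so do half of those suprema, which gives $\EU(\cQ)=\EU(\cQ')$.

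The only step needing care is the interpretation of ``translate''. It must be an additive shift by a zero-sum vector $v$, with the shifted set remaining inside $\ksimplex$, so that every $p+v$ is a genuine probability vector and $d_\cF(p+v,q+v)$ is defined. Once this is fixed, the argument is purely algebraic, and it holds for any class $\cF$ of functions on $\cY$.
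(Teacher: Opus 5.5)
Your proof is correct and uses the same idea as the paper: the shift adds the constant $\sum_y f(y)v(y)$ to every expectation, and this constant cancels in differences. The only difference is cosmetic: the paper works with the second form of \eqref{eq:eu} and shows that $\sup_{p'\in\cQ'}\EE_{p'}[f]-\inf_{q'\in\cQ'}\EE_{q'}[f]$ is unchanged for each fixed $f$, whereas you show that every pairwise distance $d_\cF$ is preserved under the bijection $p\mapsto p+v$ (and your explicit requirement $\cQ+v\subseteq\ksimplex$ is the correct reading of the hypothesis).
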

Finally, we relate our epistemic quantification to MMI based
on the lower and upper probability envelopes induced by $\cQ$.
Unlike MMI, our measure operates directly on the credal set, while remaining related.
\begin{proposition}\label{prop:diammmi}
Let $\cQ \subseteq \Delta_{K-1}$ be a credal set. Then its
$\cF$-diameter is upper bounded by the MMI of its lower and upper
probabilities:
\begin{align*}
\sup_{p,q\in \cQ} d_{\cF}(p,q)
\le
d_{\cF}(\underline p_Q,\overline p_Q).
\end{align*}
Consequently,
\begin{align*}
\on{EU}_{\cF}(\cQ)
\le
\frac12 d_{\cF}(\underline p_Q,\overline p_Q).
\end{align*}
Equality holds whenever the lower expectation over $\cQ$ coincides with the
Choquet integral with respect to $\underline p_Q$, e.g., when
$\underline p_Q$ is 2-monotone.
\end{proposition}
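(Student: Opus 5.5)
The plan is to prove the inequality for a fixed test function and then take suprema over $\cF$, using the second expression for $\EU$ in \eqref{eq:eu}. Write $\underline{E}_\cQ[f]:=\inf_{q\in\cQ}\EE_q[f]$ and $\overline{E}_\cQ[f]:=\sup_{p\in\cQ}\EE_p[f]$. For any $p,q\in\cQ$ and $f\in\cF$ we have
\[
\EE_p[f]-\EE_q[f]\le \overline{E}_\cQ[f]-\underline{E}_\cQ[f].
\]
Since $\cF$ need not be closed under negation, I also apply this with $p,q$ swapped. This shows $|\EE_p[f]-\EE_q[f]|\le \overline{E}_\cQ[f]-\underline{E}_\cQ[f]$ for every $f$. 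Hence
\[
\sup_{p,q\in\cQ}d_\cF(p,q)\le\sup_{f\in\cF}\bigl(\overline{E}_\cQ[f]-\underline{E}_\cQ[f]\bigr).
\]
This is exactly the identity already recorded in \eqref{eq:eu}. In fact the reverse inequality also holds, by choosing near-optimal $p,q$ for a fixed $f$, so this step is an equality.

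The key step is to compare the lower and upper expectations with Choquet integrals. For $f$ on the finite set $\cY$, sort its values $f(y_{(1)})\ge\dots\ge f(y_{(K)})$ and let $A_i=\{y_{(1)},\dots,y_{(i)}\}$. Then
\[
\Cint f\,\dd\underline p_\cQ=\sum_i \bigl(f(y_{(i)})-f(y_{(i+1)})\bigr)\underline p_\cQ(A_i),
\]
with the convention $f(y_{(K+1)})=0$; this telescoping form handles possibly negative $f$, since $\underline p_\cQ(\cY)=1$. For each $q\in\cQ$, the same decomposition gives $\EE_q[f]=\sum_i(\dots)q(A_i)$. All coefficients except the last are nonnegative, and the last multiplies $q(\cY)=1$. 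Since $q(A_i)\ge\underline p_\cQ(A_i)$, we get $\EE_q[f]\ge\Cint f\,\dd\underline p_\cQ$, and taking the infimum gives $\underline E_\cQ[f]\ge \Cint f\,\dd\underline p_\cQ$. Symmetrically, using $q(A_i)\le\overline p_\cQ(A_i)$, we get $\overline E_\cQ[f]\le\Cint f\,\dd\overline p_\cQ$.

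Subtracting the two bounds and taking the supremum over $f$ yields the first claimed inequality. Multiplying by $\tfrac12$ and using the definition \eqref{eq:eu} gives the bound on $\EU(\cQ)$.

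For the equality clause, suppose $\underline E_\cQ[f]=\Cint f\,\dd\underline p_\cQ$ for all $f$. By conjugacy, $\overline p_\cQ(A)=1-\underline p_\cQ(A^c)$, and the Choquet integral satisfies $\Cint f\,\dd\overline p_\cQ=-\Cint(-f)\,\dd\underline p_\cQ$. Together with $\overline E_\cQ[f]=-\underline E_\cQ[-f]$, this shows the upper expectation also equals its Choquet integral, so both inequalities are tight for every $f$. For a 2-monotone $\underline p_\cQ$, I will invoke the classical result that the Choquet integral equals the lower envelope of the core (Walley; Denneberg). I will also use that $\cQ$ has the same lower expectation as that core, since under 2-monotonicity the core's extreme points are the permutation measures, which attain $\underline p_\cQ$ on chains.

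The main obstacle is precisely this last identification. The core may be strictly larger than $\cQ$, so I must argue at the level of lower expectations rather than sets. I would try the chain argument: the greedy measure for the ordering induced by $f$ satisfies $\EE[f]=\Cint f\,\dd\underline p_\cQ$. Under 2-monotonicity this measure is in the core, and by the first inequality the core's lower expectation of $f$ is at least $\Cint f\,\dd\underline p_\cQ$, so that lower expectation equals $\Cint f\,\dd\underline p_\cQ$. I would then need to show that $\underline E_\cQ[f]$ equals the same value; this is where the stated hypothesis is used directly.
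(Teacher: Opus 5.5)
Your proof of the inequality is correct. So is your derivation of equality under the hypothesis $\inf_{q\in\cQ}\EE_q[f]=\Cint f\,\dd\underline p_\cQ$ for all $f$, where the conjugacy step matches the paper's use of the asymmetry identity. Where you differ is the key comparison. The paper cites Lemma~5 of \citet{chau2025integral} for $\Cint f\,\dd\underline p_\cQ\le\inf_{p\in\cQ}\EE_p[f]$ and obtains the upper side from $\Cint f\,\dd\overline p_\cQ=-\Cint(-f)\,\dd\underline p_\cQ$. You instead expand $\EE_q[f]$ and both Choquet integrals along the chain $A_1\subset\dots\subset A_K=\cY$ determined by $f$, compare termwise, and treat the upper side directly. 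Your route is self-contained and handles signed $f$ correctly through $A_K=\cY$. It also shows that only $\underline p_\cQ(A_i)\le q(A_i)\le\overline p_\cQ(A_i)$ is used, so the bound needs no structure on $\cQ$. The paper's route is shorter but outsources exactly this computation.

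The gap is the 2-monotone example, and it cannot be closed because that clause is false as stated. Your step that $\cQ$ has the same lower expectation as the core, because the permutation measures attain $\underline p_\cQ$ on chains, does not follow. Those measures lie in the core, not necessarily in $\cQ$; you were right to flag this as the obstacle.

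Concretely, let $K=3$, $0<a\le\frac13$, and $\cQ=\mathrm{conv}\{(\frac13,\frac13+a,\frac13-a),(\frac13-a,\frac13,\frac13+a),(\frac13+a,\frac13-a,\frac13)\}$. Then $\underline p_\cQ(\{y\})=\frac13-a$ and $\underline p_\cQ(A)=\frac23-a$ for $|A|=2$, which one checks directly is 2-monotone. For $f=(1,-1,0)$ we get $\inf_{q\in\cQ}\EE_q[f]=-a$, whereas $\Cint f\,\dd\underline p_\cQ=-2a$ and $\Cint f\,\dd\overline p_\cQ=2a$. Take $\cF$ to be the Euclidean unit ball, i.e.\ the RKHS ball of the characteristic kernel $k(y,y')=\mathbf 1\{y=y'\}$, so Remark~\ref{rem:assumptions} holds. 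The diameter of $\cQ$ is its side length $\sqrt6\,a$. But $f/\sqrt2\in\cF$ alone gives $d_\cF(\underline p_\cQ,\overline p_\cQ)\ge 2\sqrt2\,a>\sqrt6\,a$, so the inequality is strict.

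The paper's proof makes the same leap. It asserts, via Lemma~5, that $\Cint f\,\dd\underline p_\cQ=\inf_{p\in\cQ}\EE_p[f]$ under 2-monotonicity, but that equality holds for the core of $\underline p_\cQ$, not for an arbitrary $\cQ$ inducing it. What is true is the example with the added assumption that $\cQ$ is the core of its lower probability, $\cQ=\{p\in\ksimplex: p(A)\ge\underline p_\cQ(A)\ \forall A\subseteq\cY\}$. Then the classical core theorem for 2-monotone capacities yields your hypothesis for every $f$, and your conjugacy argument finishes the proof. For $\cF_{\mathrm{TV}}$ no such assumption is needed, since $\Cint\mathbf 1_A\,\dd\underline p_\cQ=\underline p_\cQ(A)$ (Corollary~\ref{cor:mmiequiv}).
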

Closed-form expressions and computational properties depend on the specific choice of~$\cF$.
In the next section, we specialize to the total variation test class, which satisfies the assumptions of Remark~\ref{rem:assumptions} on finite label spaces and yields explicit formulas for all proposed quantities.

\subsection{Total Variation Instantiation} \label{sec:closedforms}
We instantiate the proposed framework with total variation (TV), using the test class $\cF_{\mathrm{TV}} \;\coloneqq\; \{\mathbf 1_A : A\subseteq \cY\}$.
Then, for all $p,q\in\ksimplex$, the induced IPM coincides with the total variation distance
\[
d_{\cF_{\mathrm{TV}}}(p,q)
= \sup_{A\subseteq\cY}\bigl(p(A)-q(A)\bigr).
\]
This choice is natural in multiclass classification because it requires no additional geometric structure on the label space and yields the following simple closed-form expressions.
\begin{proposition}\label{prop:closedforms}
Let $\cQ \subseteq \ksimplex$ be a credal set. Then, instantiating \eqref{eq:tu}, \eqref{eq:au}, and \eqref{eq:eu}  with the test class $\cF=\cF_{\mathrm{TV}}$ yields the following closed-form expressions:
\begin{equation*}
\begin{aligned}
\on{TU}_{\cF_{\mathrm{TV}}}(\cQ)
&= 1 - \sup_{y \in \cY} \underline{p}_{\cQ}(\{y\}),\\[0.3em]
\on{AU}_{\cF_{\mathrm{TV}}}(p)
&= 1 - \sup_{y \in \cY} p(\{y\}),\\[0.3em]
\on{EU}_{\cF_{\mathrm{TV}}}(\cQ)
&= \frac{1}{4}\sup_{p,q\in\cQ}\sum_{y\in\cY}\bigl|p(\{y\})-q(\{y\})\bigr|.
\end{aligned}
\end{equation*}
\end{proposition}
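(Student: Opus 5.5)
The plan is to first record the elementary identity for the TV distance and then evaluate each of the three quantities directly.

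\emph{Basic identity.} For $p,q\in\ksimplex$, the supremum over $A$ of $p(A)-q(A)$ is attained at $A^\ast=\{y: p(\{y\})>q(\{y\})\}$. Since $\sum_y (p(\{y\})-q(\{y\}))=0$, the positive and negative parts of $p-q$ have equal mass. This gives
\[
d_{\cF_{\mathrm{TV}}}(p,q)=\tfrac12\sum_{y\in\cY}\bigl|p(\{y\})-q(\{y\})\bigr|.
\]
The absolute value in the IPM definition causes no issue, because $\cF_{\mathrm{TV}}$ is closed under complements and $p(A^c)-q(A^c)=-(p(A)-q(A))$.

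\emph{Aleatoric and total uncertainty.} Specialise the identity to $q=\dy$. The vector $p-\dy$ has negative part only at coordinate $y$, with mass $1-p(\{y\})$. Hence $d_{\cF_{\mathrm{TV}}}(p,\dy)=1-p(\{y\})$.

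For aleatoric uncertainty, minimising over $y$ gives $\on{AU}_{\cF_{\mathrm{TV}}}(p)=1-\max_y p(\{y\})$.

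For total uncertainty, take the supremum over $p\in\cQ$ for fixed $y$:
\[
\sup_{p\in\cQ}\bigl(1-p(\{y\})\bigr)=1-\inf_{p\in\cQ}p(\{y\})=1-\underline p_\cQ(\{y\}).
\]
Taking the infimum over $y$ then turns the $\inf$ into $1-\sup_y\underline p_\cQ(\{y\})$. Since $\cY$ is finite, sup and max coincide, so there are no attainment issues.

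\emph{Epistemic uncertainty.} Plug the identity into \eqref{eq:eu}:
\[
\on{EU}_{\cF_{\mathrm{TV}}}(\cQ)=\tfrac12\sup_{p,q\in\cQ}\tfrac12\|p-q\|_1,
\]
which is the claimed $\tfrac14$ formula.

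The only mildly delicate step is the basic identity itself, in particular checking that the supremum without absolute value equals the IPM with absolute value. Closure of $\cF_{\mathrm{TV}}$ under complements settles this. Everything else is direct substitution.
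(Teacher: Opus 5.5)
Your proposal is correct and follows essentially the same route as the paper: reduce $d_{\cF_{\mathrm{TV}}}(p,\delta_y)$ to $1-p(\{y\})$ and $d_{\cF_{\mathrm{TV}}}(p,q)$ to $\tfrac12\sum_y|p(\{y\})-q(\{y\})|$, then substitute into \eqref{eq:tu}, \eqref{eq:au}, and \eqref{eq:eu}. The only difference is the order of derivation: you first prove the half-$\ell_1$ identity, which the paper merely asserts, and then specialize it to Diracs, whereas the paper handles the Dirac case directly by noting that the supremum is attained at $A=\{y\}$ and $A=\{y\}^c$.
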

\begin{remark}
Beyond total variation, alternative choices of $\cF$ tailor the semantics of the uncertainty measures.
For example, a Wasserstein--1 class (1-Lipschitz functions) incorporates a ground metric on $\cY$, so disagreements between nearby labels contribute less than between distant ones. This is particularly useful in ordinal, hierarchical, or cost-sensitive classification settings.
\end{remark}

For a precise model $p$, $\on{AU}_{\cF_{\mathrm{TV}}}(p)$ is the pointwise Bayes error under $0$--$1$ loss.
Moreover, $\on{TU}_{\cF_{\mathrm{TV}}}(\cQ)$ and $\on{AU}_{\cF_{\mathrm{TV}}}(p)$ depend only on singleton masses and can therefore be evaluated in linear time.
The epistemic component $\on{EU}_{\cF_{\mathrm{TV}}}(\cQ)$ is a maximal pairwise TV distance over $\cQ$ (equivalently, a maximal $\ell_1$ distance up to the factor $\tfrac12$), and for compact convex sets the supremum is attained at extreme points.
Hence, for finitely generated credal sets, it suffices to compute the maximum (TV) distance between generators.
\begin{proposition}\label{prop:AUTVextreme}
Let $\cQ \subseteq \ksimplex$ be a credal set and let $\mathrm{ext}(\cQ)$ denote its extreme points.
For $\cF=\cF_{\mathrm{TV}}$, the lower endpoint $\underline{\on{AU}}_{\cF_{\mathrm{TV}}}(\cQ)$ satisfies A5.
If there exists a class $y^\star\in\cY$ such that
$y^\star \in \argmax_{y\in\cY} p(\{y\})$ for every $p\in\mathrm{ext}(\cQ)$,
then $\overline{\on{AU}}_{\cF_{\mathrm{TV}}}(\cQ)$ is attained at an extreme point.
\end{proposition}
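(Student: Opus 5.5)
Both claims follow from the closed form in Proposition~\ref{prop:closedforms}, $\on{AU}_{\cF_{\mathrm{TV}}}(p) = 1 - \max_{y\in\cY} p(\{y\})$, combined with convexity and linearity arguments. Write $g(p) := \max_{y} p(\{y\})$, so that $\on{AU}_{\cF_{\mathrm{TV}}}(p) = 1-g(p)$. Each map $p\mapsto p(\{y\})$ is linear, so $g$ is a pointwise maximum of finitely many linear functionals and is therefore convex and continuous on $\ksimplex$. Throughout we take $\cQ$ compact and convex; by Krein--Milman (finite-dimensional Minkowski/Carathéodory), $\cQ=\mathrm{conv}(\mathrm{ext}(\cQ))$ and every $p\in\cQ$ is a finite convex combination of extreme points.

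\emph{Lower endpoint.} We have $\underline{\on{AU}}_{\cF_{\mathrm{TV}}}(\cQ) = 1-\sup_{p\in\cQ} g(p)$. Since $g$ is convex, its supremum over a compact convex set is attained at an extreme point. Concretely, if $p=\sum_i\lambda_i q_i$ with $q_i\in\mathrm{ext}(\cQ)$, then $g(p)\le\sum_i\lambda_i g(q_i)\le\max_i g(q_i)$. Hence $\sup_{\cQ} g=\max_{\mathrm{ext}(\cQ)} g$, which is A5. One can also exchange the two maxima: $\sup_{p\in\cQ}\max_y p(\{y\})=\max_y \overline p_{\cQ}(\{y\})$, and each upper probability is attained at an extreme point by linearity. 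This gives the same conclusion.

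\emph{Upper endpoint.} Here $\overline{\on{AU}}_{\cF_{\mathrm{TV}}}(\cQ)=1-\inf_{p\in\cQ} g(p)$, and minimising a convex function need not occur at an extreme point. This is exactly where the common-argmax hypothesis is needed. Suppose $y^\star$ is a maximiser of $p(\{\cdot\})$ for every $p\in\mathrm{ext}(\cQ)$. Take any $p=\sum_i\lambda_i q_i\in\cQ$ with $q_i$ extreme. For each $y$,
\[
p(\{y\})=\sum_i\lambda_i q_i(\{y\})\le\sum_i\lambda_i q_i(\{y^\star\})=p(\{y^\star\}),
\]
so $y^\star$ remains a maximiser for $p$ and $g(p)=p(\{y^\star\})$. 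Thus $g$ coincides on $\cQ$ with the linear functional $p\mapsto p(\{y^\star\})$. A linear functional attains its infimum over the compact convex set $\cQ$ at an extreme point, namely $\underline p_{\cQ}(\{y^\star\})$ attained at some extreme point. Therefore $\overline{\on{AU}}_{\cF_{\mathrm{TV}}}(\cQ)=1-\min_{q\in\mathrm{ext}(\cQ)} q(\{y^\star\})$ is attained at an extreme point.

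The main step requiring care is the propagation of the common argmax from extreme points to all of $\cQ$, which turns the convex $g$ into a linear function on $\cQ$. The rest is standard. One should also note that compactness guarantees the infima and suprema are attained, and that $\mathrm{ext}(\cQ)$ may be infinite for general credal sets. Attainment on $\mathrm{ext}(\cQ)$ then still holds, because $\mathrm{ext}(\cQ)$ is bounded and a linear or convex function's extremum over $\cQ$ is achieved at some extreme point of the compact face where it is optimal.
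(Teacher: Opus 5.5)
Your proof is correct and follows the paper's route. For the lower endpoint you use convexity of $p\mapsto\max_y p(\{y\})$ with an extreme-point (Bauer) principle, and for the upper endpoint you show the common maximiser $y^\star$ persists on all of $\cQ$, so that $\overline{\on{AU}}_{\cF_{\mathrm{TV}}}(\cQ)=1-\underline p_{\cQ}(\{y^\star\})$. Your explicit convex-combination argument for that persistence fills in a step the paper only asserts. One minor point: your closing claim that attainment holds ``because $\mathrm{ext}(\cQ)$ is bounded'' gives the wrong reason. You do not need it, since applying your Carath\'eodory decomposition to an optimiser of $g$ on the compact set $\cQ$ already yields attainment at an extreme point.
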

Table~\ref{tab:complexity} summarizes the resulting complexities and compares them with those of alternative uncertainty measures.
In addition to its computational appeal, the TV instantiation admits a connection to the MMI, as the next result shows.
\begin{corollary}\label{cor:mmiequiv}
For $\cF=\cF_{\mathrm{TV}}$, the following identity holds:
\begin{equation}\label{eq:mmiequiv}
\on{EU}_{\cF_{\mathrm{TV}}}(\cQ)
\;=\;
\frac{1}{2}\, d_{\cF_{\mathrm{TV}}}\!\bigl(\underline{p}_{\cQ},\,\overline{p}_{\cQ}\bigr).
\end{equation}
\end{corollary}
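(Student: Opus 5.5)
The plan is to derive Corollary~\ref{cor:mmiequiv} from the equality case of Proposition~\ref{prop:diammmi}. That proposition already gives
\[
\on{EU}_{\cF}(\cQ)\le \tfrac12 d_{\cF}(\underline p_\cQ,\overline p_\cQ)
\]
for any admissible $\cF$. It also states that equality holds whenever, for every $f\in\cF$, the lower expectation $\inf_{q\in\cQ}\EE_q[f]$ coincides with the Choquet integral $\Cint f\,\dd\underline p_\cQ$, and, by conjugacy, $\sup_{p\in\cQ}\EE_p[f]$ coincides with $\Cint f\,\dd\overline p_\cQ$. So it suffices to verify this coincidence for the test class $\cF_{\mathrm{TV}}=\{\mathbf 1_A: A\subseteq\cY\}$. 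Note that we cannot appeal to 2-monotonicity here, because a general credal set need not have a 2-monotone lower probability. We will therefore use the special form of indicator functions instead.

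The key step is as follows. For an indicator $f=\mathbf 1_A$, the Choquet integral with respect to any normalized monotone set function $\nu$ with $\nu(\emptyset)=0$ reduces to
\[
\Cint \mathbf 1_A\,\dd\nu=\int_0^1 \nu(\{\mathbf 1_A\ge t\})\,\dd t=\nu(A).
\]
Meanwhile, $\inf_{q\in\cQ}\EE_q[\mathbf 1_A]=\inf_{q\in\cQ}q(A)=\underline p_\cQ(A)$ and $\sup_{p\in\cQ}\EE_p[\mathbf 1_A]=\overline p_\cQ(A)$, both by definition of the envelopes. Hence the lower and upper expectations over $\cQ$ coincide with the Choquet integrals for every test function in $\cF_{\mathrm{TV}}$. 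Plugging this into the MMI definition gives
\[
d_{\cF_{\mathrm{TV}}}(\underline p_\cQ,\overline p_\cQ)=\sup_{A\subseteq\cY}\bigl(\overline p_\cQ(A)-\underline p_\cQ(A)\bigr).
\]

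To keep the argument self-contained rather than relying only on the equality clause, we will also verify the identity directly. By the second line of \eqref{eq:eu},
\[
2\,\on{EU}_{\cF_{\mathrm{TV}}}(\cQ)=\sup_{A}\Bigl(\sup_{p\in\cQ}p(A)-\inf_{q\in\cQ}q(A)\Bigr)=\sup_A\bigl(\overline p_\cQ(A)-\underline p_\cQ(A)\bigr).
\]
This requires only exchanging $\sup_{p,q}\sup_A$ into $\sup_A\sup_{p,q}$, which is legitimate because iterated suprema always commute. The one care point is that the absolute value in the IPM definition disappears. This holds because $\mathbf 1_{A^c}=1-\mathbf 1_A$ lies in the class, so $|p(A)-q(A)|$ is attained as a signed difference for $A$ or for $A^c$. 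Comparing the two displayed expressions and dividing by two gives \eqref{eq:mmiequiv}.

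I expect the only real obstacle to be this bookkeeping: confirming that the second line of \eqref{eq:eu} correctly handles the absolute value, and checking the Choquet integral convention (for instance, the MMI is written without an absolute value, but the difference is nonnegative since $\overline p_\cQ\ge\underline p_\cQ$). No 2-monotonicity argument is needed.
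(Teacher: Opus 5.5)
Your proposal is correct and is essentially the paper's argument. Both reduce the claim to $\sup_{p,q\in\cQ}\sup_{A\subseteq\cY}|p(A)-q(A)|=\sup_{A\subseteq\cY}\bigl(\overline{p}_{\cQ}(A)-\underline{p}_{\cQ}(A)\bigr)$; the paper does this via the bound $|p(A)-q(A)|\le\overline{p}_{\cQ}(A)-\underline{p}_{\cQ}(A)$ plus attainment of the envelopes, while you interchange suprema, which needs no attainment. Your explicit check that $\Cint \mathbf{1}_A\,\dd\nu=\nu(A)$ justifies the final identification with $d_{\cF_{\mathrm{TV}}}(\underline{p}_{\cQ},\overline{p}_{\cQ})$, which the paper's proof only asserts, and since it gives the identity termwise in $A$, you do not actually need the equality clause of Proposition~\ref{prop:diammmi}.
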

\begin{table}[t!]
\centering 
\begin{tabular}{@{}lll@{}}
\toprule
\textbf{Component} & \textbf{Measure} & \textbf{Complexity} \\
\midrule
\multirow{2}{*}{Total}
& Ours (TV)   & $\mathcal{O}(M K)$ \\
& Entropy     & $\mathcal{O}(T_{\mathrm{conv}} M K)$ \\
\midrule
\multirow{4}{*}{Aleatoric}
& Ours ($\overline{\on{AU}}_{\cF_{\mathrm{TV}}}$)
& $\Omega(MK)$, $\mathcal{O}(T_{\mathrm{lp}} M K)$ \\
& Ours ($\underline{\on{AU}}_{\cF_{\mathrm{TV}}}$)
& $\mathcal{O}(M K)$ \\
& Entropy
& $\mathcal{O}(M K)$ \\
& Hartley
& $\mathcal{O}(T_{\mathrm{conv}} M K + 3^K)$ \\
\midrule
\multirow{3}{*}{Epistemic}
& Ours (TV)
& $\mathcal{O}(M^2 K)$ \\
& Entropy
& $\mathcal{O}(T_{\mathrm{conv}} M K)$ \\
& Hartley
& $\mathcal{O}(3^K)$ \\
\bottomrule
\end{tabular}
\caption{Computational complexity of uncertainty measures for finitely generated credal sets, where $M$ is the number of generators, $K$ the number of classes, and $T_{\mathrm{conv}}$ / $T_{\mathrm{lp}}$ the iterations of a convex / linear optimization routine. For $\overline{\on{AU}}_{\cF_{\mathrm{TV}}}$, we report asymptotic lower and upper bounds.}
\label{tab:complexity}
\end{table}
Finally, in the binary case, choosing the lower endpoint summary $\underline{\on{AU}}_{\cF}(\cQ)$ recovers the decomposition of \citet{hullermeier2022quantification}.
More precisely, using $\Phi_{\min}(A)=\inf A$ yields the following identities.
\begin{proposition}\label{prop:binary-tv}
Let $\cY=\{0,1\}$ and let $\cQ \subseteq \Delta_1$ be a credal set with
\begin{equation*}
a:=\inf_{p\in\cQ} p(\{0\}),
\quad
b:=\sup_{p\in\cQ} p(\{0\}),
\quad
0\le a\le b\le 1.
\end{equation*}
Then, for $\cF=\cF_{\mathrm{TV}}$,
\begin{equation*}
\begin{aligned}
\on{TU}_{\cF_{\mathrm{TV}}}(\cQ) &= \min\{1-a,\, b\},\\
\underline{\on{AU}}_{\cF_{\mathrm{TV}}}(\cQ) &= \min\{a,\, 1-b\},\\
\on{EU}_{\cF_{\mathrm{TV}}}(\cQ) &= \tfrac12(b-a),
\end{aligned}
\end{equation*}
and in particular
\begin{equation*}
\on{TU}_{\cF_{\mathrm{TV}}}(\cQ)
=
\underline{\on{AU}}_{\cF_{\mathrm{TV}}}(\cQ)
+
2\,\on{EU}_{\cF_{\mathrm{TV}}}(\cQ).
\end{equation*}
\end{proposition}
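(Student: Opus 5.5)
The plan is to specialise the closed forms of Proposition~\ref{prop:closedforms} to $K=2$ and then check the additive identity by cases. A binary credal set is a closed convex subset of $\Delta_1$, and $\Delta_1$ is parametrised by $p(\{0\})\in[0,1]$. So $\cQ$ is identified with the interval $\{p : p(\{0\})\in[a,b]\}$, with $p(\{1\}) = 1-p(\{0\})$. Closedness guarantees that $a$ and $b$ are attained. Both endpoints belong to $\cQ$ and are exactly its extreme points.

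\emph{Total uncertainty.} By Proposition~\ref{prop:closedforms}, $\on{TU}_{\cF_{\mathrm{TV}}}(\cQ) = 1-\max\{\underline p_\cQ(\{0\}),\underline p_\cQ(\{1\})\}$. Here $\underline p_\cQ(\{0\})=a$ and $\underline p_\cQ(\{1\}) = \inf_{p\in\cQ}(1-p(\{0\})) = 1-b$. Hence $\on{TU} = 1-\max\{a,1-b\} = \min\{1-a,b\}$.

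\emph{Lower aleatoric uncertainty.} Again by Proposition~\ref{prop:closedforms}, $\on{AU}(p) = 1-\max\{t,1-t\} = \min\{t,1-t\}$ with $t=p(\{0\})$. Then $\underline{\on{AU}}(\cQ) = \inf_{t\in[a,b]} \min\{t,1-t\}$. This function is concave (a tent) in $t$, so its infimum over the interval is attained at an endpoint. This is also what A5 in Proposition~\ref{prop:AUTVextreme} gives. The infimum is therefore $\min\{a,1-a,b,1-b\}$. Since $a\le b$, we have $a\le b$ and $1-b\le 1-a$, so this reduces to $\min\{a,1-b\}$.

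\emph{Epistemic uncertainty.} For $p,q\in\cQ$, $\sum_y |p(\{y\})-q(\{y\})| = 2|p(\{0\})-q(\{0\})|$. Its supremum over $\cQ$ is $2(b-a)$. Multiplying by $\tfrac14$ gives $\on{EU}=\tfrac12(b-a)$.

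\emph{The identity.} We must show $\min\{1-a,b\} = \min\{a,1-b\} + (b-a)$. Adding $b-a$ inside the minimum gives $\min\{a,1-b\}+(b-a) = \min\{b,\,1-a\}$, which is exactly $\on{TU}$. No case split is needed.

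The only mildly delicate point is the reduction $\min\{a,1-a,b,1-b\}=\min\{a,1-b\}$ together with the concavity argument for the infimum. Both are elementary, so I expect no real obstacle.
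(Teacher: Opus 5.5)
Your proposal is correct and follows the paper's proof essentially step for step: you specialise the closed forms of Proposition~\ref{prop:closedforms} to $K=2$ for $\on{TU}$, $\underline{\on{AU}}_{\cF_{\mathrm{TV}}}$ and $\on{EU}$, and then verify $\min\{a,1-b\}+(b-a)=\min\{b,1-a\}$ directly. The only cosmetic difference is the lower aleatoric step: the paper interchanges supremum and maximum, $\sup_{p\in\cQ}\max\{p(\{0\}),1-p(\{0\})\}=\max\{b,1-a\}$, which needs neither your concavity/endpoint argument nor the identification of $\cQ$ with a closed interval.
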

This shows that the proposed framework extends the binary decomposition to general credal sets and multiclass settings, and makes the summarization step explicit.

\section{Experiments}\label{sec:exp}
Empirically assessing uncertainty quantification is difficult, especially because there is no directly observable ground-truth uncertainty. We therefore conduct selective prediction experiments, where the model may abstain on inputs it deems uncertain. Accuracy–rejection curves (ARCs) \citep{huhn2008fr3} visualize this: sort test points by decreasing uncertainty score, reject the top $r\%$, and compute accuracy on the remaining. For an uncertainty-aware learner that can distinguish certain from less certain cases, accuracy should generally improve as the rejection rate $r$ increases.

\begin{figure*}[p]
    \centering
    \caption{ Accuracy–rejection curves on \dataset{CIFAR-10} \emph{(left)} and \dataset{CIFAR-100} \emph{(right)} for credal set predictors: \textbf{(a)}~Total uncertainty with $\langle \on{TU}_{\cF_{\mathrm{TV}}}\rangle$ and $\langle S^*\rangle$; \textbf{(b)}~Aleatoric uncertainty with $\langle\on{AU}_{\cF_{\mathrm{TV}}}^{\Phi_{\mathrm{int}}}\rangle$, $\langle S_*\rangle$, and $\langle S^* - GH\rangle$; \textbf{(c)}~Epistemic uncertainty with $\langle \on{EU}_{\cF_{\mathrm{TV}}}\rangle$, $\langle S^* - S_*\rangle$, and $\langle GH\rangle$. At rejection rate $r$, test points are sorted by uncertainty and the top $r\%$ most-uncertain are discarded; accuracy is computed on the remainder.  Solid lines show the mean over seeds; shaded bands denote $\pm 1$ s.d. The AUC of each curve ($\uparrow$ higher is better) is printed next to the corresponding label. Our framework is consistently competitive across both datasets.} 
    \includegraphics[width= \textwidth]{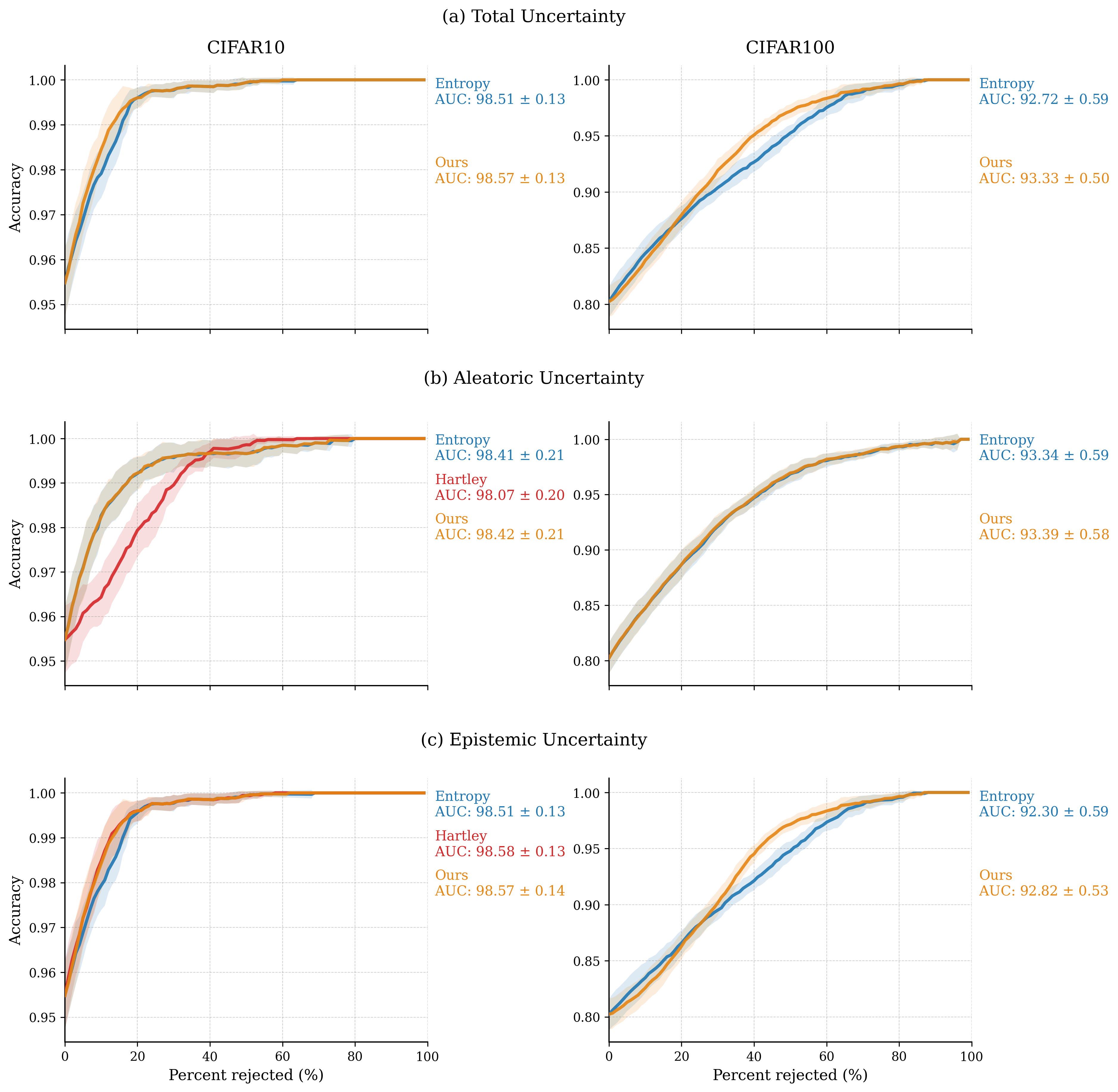}

    \label{fig:ar_cifar}
\end{figure*}

\begin{table*}[t!]
\centering
\caption{ Performance comparison of uncertainty quantification measures across datasets from the KEEL repository. Results are averaged over 64 datasets, 59 with $K \leq 10$ classes and 5 with $K > 10$. The table reports the mean $\pm$ standard deviation of the Area Under the AR Curve (AUC), Monotonicity Ratio (MR) and time (in seconds) for each uncertainty measure. Time refers to the total computation of the uncertainty measure over the entire test set. Bold values show the best mean per component--measure--$K$ group; a method is treated as tied with the best if its mean is no more than 1 SEM away from the best, where  $\mathrm{SEM} = \sigma/\sqrt{N}$ (using the method's own $\sigma$).}  
\label{tab:overall_results}
\begin{tabular}{llcccccc}
\toprule
\multirow{2}{*}{Component} &
\multirow{2}{*}{Measure} &
\multicolumn{3}{c}{Low number of classes ($K \leq 10$, $N = 59$)} &
\multicolumn{3}{c}{High number of classes ($K > 10$, $N = 5$)} \\
\cmidrule(lr){3-5} \cmidrule(lr){6-8}
& & AUC & MR & Time (s) & AUC & MR & Time (s)\\
\midrule
AU & Hartley  & 83.8 $\pm$ 11.5 & 52.4 $\pm$ 18.6 & 18.054 $\pm$ 49.8 & -- & -- & -- \\
   & Entropy  & \textbf{89.2} $\pm$ 9.5  & \textbf{77.6} $\pm$ 15.1 & \textbf{0.001} $\pm$ $\approx$ 0.0 &
          \textbf{92.9} $\pm$ 4.2  & \textbf{89.9} $\pm$ 6.9  & \textbf{0.006} $\pm$ $\approx$ 0.0 \\
   & Ours     & \textbf{89.2} $\pm$ 9.5 & \textbf{77.6} $\pm$ 15.0 & 0.323 $\pm$  1.0 &
          \textbf{92.9} $\pm$ 4.1 & \textbf{90.0} $\pm$ 7.3 & 1.249 $\pm$ 1.8 \\
\midrule
EU & Hartley  & \textbf{87.1} $\pm$ 11.5 & \textbf{65.9} $\pm$ 17.5 & 8.725 $\pm$ 33.0 & -- & -- & -- \\
   & Entropy  & 79.5 $\pm$ 17.0 & 46.7 $\pm$ 23.0 & 9.291 $\pm$ 27.0 &
          84.5 $\pm$ 11.3 & 49.2 $\pm$ 17.1 & 11.669 $\pm$ 14.0 \\
   & Ours & \textbf{86.8} $\pm$ 12.0 & \textbf{64.6} $\pm$ 17.9 & \textbf{0.004} $\pm$ $\approx$ 0.0 &
          \textbf{89.8} $\pm$ 7.1 & \textbf{69.0} $\pm$ 13.8 & \textbf{0.018} $\pm$ $\approx$ 0.0 \\
\midrule
TU & Entropy  & \textbf{88.4} $\pm$ 10.3 & \textbf{73.0} $\pm$ 16.2 & 9.290 $\pm$ 26.567 &
          \textbf{90.6} $\pm$ 5.9 & \textbf{78.1} $\pm$ 10.6 & 11.663 $\pm$ 14.0 \\
   & Ours & \textbf{88.6} $\pm$ 10.2 & \textbf{73.2} $\pm$ 16.2 & \textbf{0.001} $\pm$ 0.003 &
          \textbf{91.0} $\pm$ 5.8 & \textbf{79.8} $\pm$ 9.3 & \textbf{0.002} $\pm$ $\approx$ 0.0 \\
\bottomrule
\end{tabular}
\end{table*} 

We benchmark TV-instantiated measures against entropy and Hartley baselines. 
For aleatoric uncertainty, we use the endpoint summary 
\begin{align*}
\on{AU}^{\Phi_{\mathrm{int}}}_{\cF_{\mathrm{TV}}}(\cQ) = \bigl( \underline{\on{AU}}_{\cF_{\mathrm{TV}}}(\cQ), \overline{\on{AU}}_{\cF_{\mathrm{TV}}}(\cQ) \bigr). 
\end{align*}
Instances are ranked in descending lexicographic order: $\cQ_i$ is considered more aleatorically uncertain than $\cQ_j$ if 
\begin{align*}
\underline{\on{AU}}_{\cF_{\mathrm{TV}}}(\cQ_i) > \underline{\on{AU}}_{\cF_{\mathrm{TV}}}(\cQ_j), 
\end{align*}
or if the lower endpoints are equal and 
\begin{align*}
\overline{\on{AU}}_{\cF_{\mathrm{TV}}}(\cQ_i) > \overline{\on{AU}}_{\cF_{\mathrm{TV}}}(\cQ_j).
\end{align*}
Remaining exact ties are broken deterministically by the original sample order.
This ordering is conservative in the following sense: an instance whose
minimum attainable aleatoric uncertainty is already high is rejected before
one that still admits a low-uncertainty explanation; the upper endpoint is
used only as a tie-breaker.
Implementation details and experimental setup are described in Appendices~\ref{app:impl} and~\ref{app:exp-setup}.
Figure~\ref{fig:ar_cifar} extends \citet{chau2025integral} by reporting ARCs for total, aleatoric, and epistemic uncertainty on \dataset{CIFAR-10} and \dataset{CIFAR-100}, computed from ensemble-induced credal sets. 
Across all three components, our measures are competitive with the baselines on both datasets, often matching or improving their AUC while retaining the computational advantages predicted by Table~1.
Table~\ref{tab:overall_results} presents overall performance on a broader collection of KEEL datasets (detailed in Appendix~\ref{app:exp-setup}). Alongside the AUC, we report the Monotonicity Ratio (MR), defined as the percentage of bins in which the AR curve does not decrease, capturing shape differences AUC may overlook. Competitive performance together with low computation times highlights a key benefit of our approach. Additional experiments are provided in Appendix~\ref{app:add-results}.

\emph{Limitation.} 
We identified a pathological case, though it stems from a representational artifact rather than a deficiency of the measures themselves. If the credal set is an inaccurate representation of the underlying uncertainty, the uncertainty estimates derived from it will inherit this distortion. Although this limitation is shared by any method operating on the same representation, our measures may be more sensitive because they rely on the extreme points of the credal set: a miscalibrated ensemble member can shift the reference point corresponding to zero uncertainty. An illustrative example is given in Appendix~\ref{app:add-results}.

\section{Conclusion}\label{sec:conlc} 
In this paper, we proposed a distance-based framework for credal uncertainty quantification in multiclass classification. It yields principled measures of total, aleatoric, and epistemic uncertainty. We showed that the measures admit clear semantics, satisfy natural desiderata under mild assumptions, and recover established binary measures. For the total variation instantiation, we derived closed-form expressions with favorable complexity. Empirically, our measures are competitive with entropy and Hartley baselines while offering substantial runtime gains. 

A natural direction for future work is to further study alternative IPM  classes, such as Wasserstein or kernel-based variants, as well as extensions beyond classification to structured prediction and decision-making under imprecision. More broadly, while empirical evaluation remains predominant in machine learning, uncertainty measures can also benefit from a systematic  axiomatic analysis. Such an analysis may help clarify which qualitative  properties measures of total, aleatoric, and epistemic uncertainty should  satisfy, and whether these properties can be specified in a task-independent way or must depend on the downstream purpose \citep{hofman2026uncertainty}.


\newpage 
\begin{acknowledgements} 
Xabier Gonzalez-Garcia's and Humberto Bustince's research has been supported by the PID2022-136627NB-I00, funded by MCIN/AEI/10.13039/501100011033/FEDER UE. Yusuf Sale acknowledges support by the DAAD programme Konrad Zuse Schools of Excellence in Artificial Intelligence, sponsored by the Federal Ministry of Research, Technology and Space. Siu Lun Chau was supported by a Start-up Grant from Nanyang Technological University, Singapore. Julian Rodemann acknowledges support by the Bavarian Academy of Sciences and the LMU mentoring program.
\end{acknowledgements}

\bibliography{references}

\newpage
\onecolumn

\title{Supplementary Material}
\maketitle

\appendix
\startcontents[sections]
\printcontents[sections]{l}{1}{\setcounter{tocdepth}{3}}

\clearpage
\section{Related work}\label{app:related}
The idea behind \textit{credal sets} (and the terminology) originates from \cite{levi1974} and was later popularized and given a solid theoretical foundation by \cite{walley1991statistical}. As introduced in Section~\ref{sec:prelim}, credal sets usually---not always---refer to closed and convex sets of probability measures. This is due to the fact that under “coherence” axioms, the representing set of dominating probabilities is automatically closed and convex, see §3.3 in \citet[Chapter 3]{walley1991statistical}.
Notably, credal sets are only one of several imprecise probabilistic models \citep{augustin2014introduction}, alongside capacities \citep{choquet1954theory},
interval probabilities \citep{weichselberger2000theory,poehlmann}, and lower previsions \citep{walley1991statistical,troffaes2014lower}, to name only a few. A credal set makes the distinction between sources of uncertainty explicit: randomness within each probability measure captures aleatoric uncertainty, while the extent of the set itself reflects epistemic uncertainty. Due to their intuitive appeal and their generality, credal sets have received increasing attention in the machine learning and statistics community recently. Methodological applications range from Bayesian neural networks (BNNs) \citep{caprio_IBNN} and optimization \citep{rodemann2024imprecise} to hypothesis testing \citep{chau2025credal} and conformal prediction \citep{caprio2025conformal,caprio2025joys}. 
How to learn credal sets algorithmically has also attracted growing attention in recent years, with several complementary lines of work.
Prominent directions include relative likelihood approaches (e.g., CreRL), which define plausibility via a likelihood-ratio budget and search for a diverse set of high-likelihood hypotheses \citep{lohr2026credal}; wrapper-style methods (Credal Wrapper) that aggregate independently trained models to obtain classwise lower/upper probability bounds and hence box credal sets \citep{wang2025credal}; distance-filtered ensembles (Credal Ensembling) that retain only the closest predictions under a chosen metric \citep{nguyen2025credal}; credal deep ensembles that learn per-class lower/upper bounds with specialized heads and losses \citep{wangCredalIntervalNet2025}; and credal Bayesian deep learning, which aggregates BNNs trained under diverse priors and takes the convex hull of sampled posteriors \citep{caprio2024credalb}.  

A central question for machine learning practitioners is how to extract meaningful summaries of total, aleatoric, and epistemic uncertainty from a credal set that can drive downstream tasks such as selective prediction, out-of-distribution detection, or active learning. Several recent works have addressed this challenge. \citet{hullermeier2022quantification} provided a comparison of existing credal uncertainty measures, identifying issues such as ad hoc additive decompositions and violations of desirable axiomatic properties; notably, the measures they found to be best justified are limited to the binary classification setting, leaving the multiclass case underexplored. \citet{sale2023volume} investigated whether the geometric volume of the credal set is a meaningful measure of epistemic uncertainty, finding it to be of limited effectiveness in multiclass classification. \citet{saleLabelWise2024} proposed a label-wise decomposition of aleatoric and epistemic uncertainty, moving from global scalar summaries toward per-class quantification better suited to the structure of multiclass problems. \citet{hofman2024quantifying} proposed uncertainty measures for credal sets grounded in proper scoring rules. These newer contributions build on a rich foundation in imprecise probability theory and uncertainty quantification \citet{pal1992uncertainty,pal1993uncertainty,moral1992calculating,walley1999upper,abellan2003maximum, abellan2005additivity,abellan2006disaggregated,abellan2006measures,bronevich2008axioms,bronevich2010measures}.  

In contrast to the existing approaches discussed above, our work departs from the current literature in three key respects.
First, we abandon the additive decomposition convention and instead provide independent, semantically grounded definitions for each uncertainty component---total, aleatoric, and epistemic---based on distance-based principles within the IPM framework.
Second, rather than reducing credal aleatoric uncertainty to a single scalar, we characterize it as the \emph{set} of aleatoric uncertainty values attainable across the credal set, thereby preserving the ambiguity encoded by the credal prediction; when a summary is needed for downstream tasks, we provide aggregation strategies that are lossless in a well-defined sense, e.g., the endpoint summary.
Third, we place particular emphasis on computational tractability: the resulting family of measures is designed to scale to modern multiclass classification settings, yielding closed-form expressions with favorable complexity for common IPM choices such as total variation.

\clearpage
\section{Proofs}\label{app:proofs}


\begin{proof}[Proof of Proposition \ref{prop:properties}] 
A1--A2 (Non-negativity, boundedness, and continuity).
Under the assumption  that $\cF$ is a uniform class with respect to weak convergence  (Remark~4.1), $d_{\cF}$ satisfies  
\begin{align*}
0 \le d_{\cF}(p,q)  \le \sup_{f\in\cF}(\sup f-\inf f)<\infty  
\end{align*}
and is jointly continuous in $(p,q)$ \citep[Theorem~4.3]{muller1997integral}. All four  quantities inherit boundedness directly. Continuity is understood with  respect to the Hausdorff topology on $\mathcal V(\Delta_{K-1})$, as  specified in A2. Since $\Delta_{K-1}$ is compact and $d_{\cF}$ is jointly continuous, the relevant objective functions are uniformly continuous. Therefore, taking suprema and infima over Hausdorff-convergent compact sets preserves continuity. Finite minima over $y\in\mathcal Y$ preserve continuity as well, which proves continuity of $\on{TU}_{\cF}$, $\underline{\on{AU}}_{\cF}$,  $\overline{\on{AU}}_{\cF}$, and $\on{EU}_{\cF}$.

A3 (Monotonicity). Let $\cQ \subseteq \cM$. Taking the supremum over a larger set is non-decreasing, so $\on{TU}_\cF(\cQ) \leq \on{TU}_\cF(\cM)$ and $\overline{AU}_{\cF}(\cQ) \leq \overline{AU}_{\cF}(\cM)$. Taking the infimum over a larger set is non-increasing, so $\underline{AU}_{\cF}(\cQ) \geq \underline{AU}_{\cF}(\cM)$. For $\EU$, we use the equivalent form $\EU(\cQ) = \frac{1}{2}\sup_{f \in \cF}\bigl(\sup_{p \in \cQ} \EE_p[f] - \inf_{q \in \cQ} \EE_q[f]\bigr)$: enlarging $\cQ$ to $\cM$ can only increase each inner supremum and decrease each inner infimum, so $\EU(\cQ) \leq \EU(\cM)$.

A4 (Probability consistency). When $\cQ = \{p\}$, $\on{TU}_\cF(\{p\}) = \underline{AU}_{\cF}(\{p\}) = \overline{AU}_{\cF}(\{p\}) = \inf_{y} d_{\cF}(p,\delta_y) = \AU(p)$, and $\EU(\{p\}) = \tfrac{1}{2}\sup_{p,q\in\{p\}} d_{\cF}(p,q) = 0$.

A5 (Extreme point characterization). For fixed $y \in \mathcal{Y}$, define
\begin{align*}
g_y(p) := d_{\cF}(p,\delta_y)
        = \sup_{f\in\cF} \left| \mathbb{E}_p[f] - f(y) \right|.
\end{align*}
Since $p \mapsto \mathbb{E}_p[f]-f(y)$ is affine for every $f\in\cF$,
the map $g_y$ is convex as a pointwise supremum of absolute affine functions.
By the Bauer maximum principle,
\begin{align*}
\sup_{p\in \cQ} g_y(p)
=
\sup_{p\in \operatorname{ext}(\cQ)} g_y(p).
\end{align*}
Taking the finite infimum over $y\in\mathcal{Y}$ gives
\begin{align*}
\on{TU}_{\cF}(\cQ)
=
\inf_{y\in\mathcal{Y}}
\sup_{p\in\operatorname{ext}(\cQ)}
d_{\cF}(p,\delta_y),
\end{align*}
so $\on{TU}_{\cF}$ satisfies A5.

For epistemic uncertainty, define
\begin{align*}
h(p,q) := d_{\cF}(p,q)
        = \sup_{f\in\cF}
          \left| \mathbb{E}_p[f]-\mathbb{E}_q[f] \right|.
\end{align*}
The map $h$ is convex and continuous on the compact convex set $\cQ\times \cQ$.
Moreover, $\operatorname{ext}(\cQ\times \cQ)
= \operatorname{ext}(\cQ)\times\operatorname{ext}(\cQ)$. Applying the Bauer
maximum principle on $\cQ\times \cQ$ yields
\begin{align*}
\sup_{p,q\in \cQ} d_{\cF}(p,q)
=
\sup_{p,q\in\operatorname{ext}(\cQ)}
d_{\cF}(p,q).
\end{align*}
Hence $\on{EU}_{\cF}$ also satisfies A5.
\end{proof}


\begin{proof}[Proof of Corollary \ref{corollary:monotone}]
If $\cQ \subseteq \cM$, then every $p \in \cQ$ also belongs to $\cM$. Hence
\begin{align*}
\{\on{AU}_\cF(p):p\in \cQ\}
\subseteq
\{\on{AU}_\cF(p):p\in \cM\}.
\end{align*}
The endpoint inequalities follow immediately from taking infima and suprema
over nested sets.
\end{proof}


%
\begin{proof}[Proof of Proposition  \ref{prop:tudominance}]
\textit{Epistemic Uncertainty.} For any $p,q \in \cQ$ and any $y \in \cY$, the triangle inequality gives $\dF{p,q} \leq \dF{p,\dy} + \dF{\dy,q}.$ Taking the supremum over $p,q \in \cQ$ and using that the two terms depend on different variables,
$$
\sup_{p,q \in \cQ} \dF{p,q}
\leq \sup_{p \in \cQ} \dF{p,\dy} + \sup_{q \in \cQ} \dF{\dy,q}
= 2\sup_{p \in \cQ} \dF{p,\dy}.
$$
Then, taking $\inf_{y \in \cY}$ on the right-hand side yields
$$
\frac{1}{2}\sup_{p,q \in \cQ} \dF{p,q}
\leq \inf_{y \in \cY}\sup_{p \in \cQ}\dF{p,\dy},
$$
which is precisely $\EU(\cQ)\leq \on{TU}_\cF(\cQ)$.

\textit{Aleatoric Uncertainty.} Since $p \in \cQ$, we have $\dF{p,\dy} \leq \sup_{q \in \cQ} \dF{q,\dy}$ for every $y \in \cY$. Taking $\inf_{y \in \cY}$ on both sides preserves the inequality, yielding $\AU(p) \leq \on{TU}_\cF(\cQ)$.
\end{proof}

 

\begin{proof}[Proof of Proposition \ref{prop:eushift}]
For a fixed $t \in [-1,1]^K$ with $\sum_{y\in\cY} t_y = 0$, define $\cQ' = \{p' \in \ksimplex : p'(\{y\}) = p(\{y\}) + t_y,\; \forall y \in \cY,\; \text{for some } p \in \cQ\}$. Expanding the expectation of $f \in \cF$ under $p'$ gives $\EE_{p'}[f] = \EE_p[f] + \sum_{y} f(y)\, t_y$. Hence, 
$$
\sup_{p'\in\cQ'}\EE_{p'}[f] - \inf_{q'\in\cQ'}\EE_{q'}[f] = \sup_{p\in\cQ}\EE_p[f] - \inf_{q\in\cQ}\EE_q[f].
$$
Halving both sides and taking $\sup_{f\in\cF}$ yields $\EU(\cQ') = \EU(\cQ)$.
\end{proof}



\begin{proof}[Proof of Proposition  \ref{prop:diammmi}]
By Lemma~5 of \citet{chau2025integral}, $\Cint f\,\dd \underline{p}_{\cQ} \leq \inf_{p\in\cQ}\EE_p[f]$ for every $f \in \cF$, with equality when $\underline{p}_{\cQ}$ is 2-monotone. Similarly, by using the asymmetry property \citep{denneberg1994non} $\Cint f\,\dd \overline{p}_{\cQ} = -\Cint(-f)\,\dd \underline{p}_{\cQ}$ yields $\Cint f\,\dd \overline{p}_{\cQ} \geq \sup_{p\in\cQ}\EE_p[f]$. 
Subtracting and taking the supremum over $f\in\cF$ gives 
\begin{align*}
d_{\cF}\bigl(\underline p_Q,\overline p_Q\bigr) = \sup_{f\in\cF} \left( \int_c f\,d\overline p_Q - \int_c f\,d\underline p_Q \right) \ge \sup_{f\in\cF} \left( \sup_{p\in \cQ}\mathbb E_p[f] - \inf_{q\in \cQ}\mathbb E_q[f] \right) = \sup_{p,q\in \cQ} d_{\cF}(p,q). 
\end{align*}
This proves the diameter bound. Dividing both sides by $2$ and using the definition of $\on{EU}_{\cF}$ yields the stated bound on epistemic uncertainty.
\end{proof}


\begin{proof}[Proof of Proposition \ref{prop:closedforms}]
We derive each expression separately for the test class $\cF_{\mathrm{TV}} = \{\mathbbm{1}_A : A \subseteq \cY\}$.

\textit{Total Uncertainty.} The total variation distance between a probability measure $p$ and a Dirac measure $\delta_y$ reduces to
$$
d_{\cF_{\mathrm{TV}}}(p, \delta_y) = \sup_{A \subseteq \cY} |p(A) - \delta_y(A)| = 1 - p(\{y\}).
$$
Indeed, since $\delta_y(A) \in \{0,1\}$ for every $A \subseteq \cY$, the supremum is attained at both $A = \{y\}$ and $A = \{y\}^c$, giving $1 - p(\{y\})$. Substituting into the definition of total uncertainty,
$$
TU_{\cF_{\mathrm{TV}}}(\cQ) 
= \inf_{y \in \cY} \sup_{p \in \cQ} \bigl(1 - p(\{y\})\bigr)
= \inf_{y \in \cY} \bigl(1 - \inf_{p \in \cQ} p(\{y\})\bigr)
= 1 - \sup_{y \in \cY} \underline{p}_{\cQ}(\{y\}).
$$

\textit{Aleatoric Uncertainty.} The same identity $d_{\cF_{\mathrm{TV}}}(p, \delta_y) = 1 - p(\{y\})$ applied to a single probability measure gives
$$
AU_{\cF_{\mathrm{TV}}}(p) = \inf_{y \in \cY} \bigl(1 - p(\{y\})\bigr) = 1 - \sup_{y \in \cY} p(\{y\}).
$$

\textit{Epistemic Uncertainty.} The total variation distance between two probability measures on $\cY$ satisfies
$$
d_{\cF_{\mathrm{TV}}}(p,q) = \sup_{A \subseteq \cY} |p(A) - q(A)| = \frac{1}{2}\sum_{y \in \cY} |p(\{y\}) - q(\{y\})|,
$$
Substituting into~\eqref{eq:eu},
$$
EU_{\cF_{\mathrm{TV}}}(\cQ) 
= \frac{1}{2} \sup_{p,q \in \cQ} d_{\cF_{\mathrm{TV}}}(p,q)
= \frac{1}{2} \sup_{p,q \in \cQ} \frac{1}{2}\sum_{y \in \cY} |p(\{y\}) - q(\{y\})|
= \frac{1}{4} \sup_{p,q \in \cQ} \sum_{y \in \cY} |p(\{y\}) - q(\{y\})|.  
$$ 
\end{proof}
 

\begin{proof}[Proof of Corollary \ref{cor:mmiequiv}]
For any $p, q \in \cQ$ and any $A \subseteq \cY$, 
$\underline{p}_{\cQ}(A) \leq p(A), q(A) \leq \overline{p}_{\cQ}(A)$, 
so $|p(A) - q(A)| \leq \overline{p}_{\cQ}(A) - \underline{p}_{\cQ}(A)$. 
Since $\cY$ is finite, the infimum and supremum defining 
$\underline{p}_{\cQ}(A)$ and $\overline{p}_{\cQ}(A)$ are attained 
for each $A$, so taking the supremum over $p, q \in \cQ$ and 
$A \subseteq \cY$ yields
$$
\sup_{p,q \in \cQ} d_{\cF_{\mathrm{TV}}}(p,q) 
= \sup_{A \subseteq \cY} 
\bigl(\overline{p}_{\cQ}(A) - \underline{p}_{\cQ}(A)\bigr) 
= d_{\cF_{\mathrm{TV}}}(\underline{p}_{\cQ}, \overline{p}_{\cQ}).
$$
Dividing both sides by $2$ gives the result.
\end{proof}


\begin{proof}[Proof of Proposition \ref{prop:AUTVextreme}]
\textit{Lower endpoint.} Similarly to the proof of Proposition~\ref{prop:properties}, the map $p \mapsto 1 - \max_{y} p(\{y\})$ is concave, as the pointwise maximum of affine functions is convex~\citep{boyd2004convex}. By the Bauer minimum principle~\citep{bauer1958minimalstellen}, its infimum over the convex compact set $\cQ$ is attained at an extreme point.

\textit{Upper endpoint.} If $\delta_{y^*}$ is the closest Dirac to every extreme point in $\mathrm{ext}(\cQ)$, then it remains so for every $p \in \cQ$, and thus $\overline{AU}_{\cF_{\mathrm{TV}}}(\cQ) = \sup_{p \in \cQ} (1 - p(\{y^*\})) = 1 - \underline{p}_{\cQ}(\{y^*\})$, which depends only on $\mathrm{ext}(\cQ)$.
\end{proof}


\begin{proof}[Proof of Proposition \ref{prop:binary-tv}]
The result follows by restricting Proposition~\ref{prop:closedforms} to $\cY = \{0,1\}$.

\textit{Total Uncertainty.} From Proposition~\ref{prop:closedforms}, $TU_{\cF_{\mathrm{TV}}}(\cQ) = 1 - \sup_{y \in \cY} \inf_{p \in \cQ} p(\{y\})$. Evaluating for each class:
$$
TU_{\cF_{\mathrm{TV}}}(\cQ) = 1 - \max\{\inf_{p \in \cQ} p(\{0\}), 1 - \sup_{p \in \cQ} p(\{0\})\} = \min\{1-a, b\}.
$$

\textit{Aleatoric Uncertainty.} In the binary case,
$\max\{p(\{0\}), p(\{1\})\} = \max\{p(\{0\}), 1 - p(\{0\})\}$,
so from Proposition~\ref{prop:closedforms},
$$
\underline{AU}_{\cF_{\mathrm{TV}}}(\cQ)
= 1 - \sup_{p \in \cQ} \max\{p(\{0\}), 1 - p(\{0\})\}
= 1 - \max\{b, 1-a\}
= \min\{a, 1-b\}.
$$

\textit{Epistemic Uncertainty.} In the binary case, $|p(\{0\}) - q(\{0\})| = |p(\{1\}) - q(\{1\})|$, so the $\ell_1$ sum in TV reduces to $2|p(\{0\}) - q(\{0\})|$. By Proposition~\ref{prop:closedforms},
$$
EU_{\cF_{\mathrm{TV}}}(\cQ) = \tfrac{1}{4} \sup_{p,q \in \cQ} 2|p(\{0\}) - q(\{0\})| = \tfrac{1}{2}\left(\sup_{p \in \cQ} p(\{0\}) - \inf_{q \in \cQ} q(\{0\})\right) = \tfrac{1}{2}(b - a).
$$

\textit{Additive decomposition.} Direct verification: $\min\{a, 1-b\} + 2 \cdot \tfrac{1}{2}(b-a) = \min\{a, 1-b\} + b - a = \min\{1-a, b\}$.
\end{proof}

\clearpage
\section{Implementation Details}\label{app:impl}
We provide pseudocode for computing both the proposed measures and the baseline methods. Let $\cQ \subseteq \Delta_{K-1}$ be a finitely generated credal prediction derived from an ensemble of $M$ models, i.e., 
\begin{align*}
\cQ=\operatorname{conv}\mathcal G_\cQ, \qquad \mathcal G_\cQ:=\{p^{(1)},\ldots,p^{(M)}\}, 
\end{align*}
where each $p^{(j)}$ is a probability measure over the class labels. The set $\mathcal G_\cQ$ is a generating set of $\cQ$; it need not coincide with $\operatorname{ext}(\cQ)$, since some generators may be redundant. All algorithms below are stated in terms of the supplied generators $\mathcal G_\cQ$. If desired, redundant generators can be removed first, replacing $\mathcal G_\cQ$ by $\operatorname{ext}(\cQ)$, which can only reduce the computational cost.

\subsection*{C.1 \quad Total Uncertainty}

\begin{algorithm}[H]
\caption{Total Uncertainty --- Ours ($\on{TU}_{\cF_{\mathrm{TV}}}$)}
\label{alg:tu_ours}
\begin{algorithmic}[1]
\Require $\mathcal G_Q$
\For{$y = 1, \dots, K$}
    \State $\underline{p}(\{y\}) \gets \min_{j=1,\dots,M} \; p^{(j)}(\{y\})$ \Comment{Lower probability}
\EndFor
\State \Return $1 - \max_{y=1,\dots,K} \; \underline{p}(\{y\})$
\end{algorithmic}
\end{algorithm}

\begin{algorithm}[H]
\caption{Total Uncertainty --- Entropy ($S^*$)}
\label{alg:tu_shannon}
\begin{algorithmic}[1]
\Require $\mathcal G_Q$
\State Solve $\displaystyle w^{*} \gets \arg\max_{w \in \Delta_{M-1}} \; S\!\Bigl(\textstyle\sum_{j=1}^{M} w_j \, p^{(j)}\Bigr)$ \Comment{Convex optimization over $\Delta_{M-1}$}
\State \Return $S\!\bigl(\textstyle\sum_{j=1}^{M} w_j^{*} \, p^{(j)}\bigr)$
\end{algorithmic}
\end{algorithm}

\subsection*{C.2 \quad Aleatoric Uncertainty}

\begin{algorithm}[H]
\caption{Aleatoric Uncertainty --- Ours, lower endpoint ($\underline{\on{AU}}_{\cF_{\mathrm{TV}}}$)}
\label{alg:au_ours}
\begin{algorithmic}[1]
\Require $\mathcal G_Q$
\For{$j = 1, \dots, M$}
    \State $AU_j \gets 1 - \max_{y=1,\dots,K} \; p^{(j)}(\{y\})$
\EndFor
\State \Return $\min_{j=1,\dots,M} \; AU_j$ \Comment{Lower endpoint of $AU_{\cF_{\mathrm{TV}}}(\cQ)$}
\end{algorithmic}
\end{algorithm}

\begin{algorithm}[H]  
\caption{Aleatoric Uncertainty --- Ours, upper endpoint ($\overline{\on{AU}}_{\cF_{\mathrm{TV}}}$)}
\label{alg:au_upper_ours}
\begin{algorithmic}[1]
\Require $\mathcal G_Q$
\State $y^*_j \gets \arg\max_{y=1,\dots,K} p^{(j)}(\{y\})$ for each $j = 1,\dots,M$
\If{$y^*_1 = y^*_2 = \cdots = y^*_M$} \Comment{All models agree on argmax}
    \State \Return $\max_{j=1,\dots,M}\; \bigl(1 - \max_{y=1,\dots,K} p^{(j)}(\{y\})\bigr)$
\Else \Comment{Models disagree $\rightarrow$ solve Linear programming}
    \State Solve $t^{*} \gets \min_{\substack{w \in \Delta_{M-1} \\ t \geq 0}} \; t
        \quad \text{s.t.} \quad
        \textstyle\sum_{j=1}^{M} w_j\,p^{(j)}(\{y\}) \leq t \;\; \forall\, y = 1,\dots,K$
    
    \State \Return $1 - t^{*}$
\EndIf
\end{algorithmic}
\end{algorithm}

\begin{algorithm}[H]
\caption{Aleatoric Uncertainty --- Entropy ($S_*$)}
\label{alg:au_shannon}
\begin{algorithmic}[1]
\Require $\mathcal G_Q$
\For{$j = 1, \dots, M$}
    \State $S_j \gets -\sum_{y=1}^{K} p^{(j)}(\{y\}) \log_2 p^{(j)}(\{y\})$ \Comment{Shannon entropy with convention $0 \log 0 := 0$}
\EndFor
\State \Return $\min_{j=1,\dots,M} \; S_j$ \Comment{Lower entropy}
\end{algorithmic}
\end{algorithm}

\begin{algorithm}[H]
\caption{Aleatoric Uncertainty --- Hartley residual ($S^\ast - GH$)}
\label{alg:au_hartley}
\begin{algorithmic}[1]
\Require $\mathcal G_Q$
\State Compute $S^\ast$ via Algorithm~\ref{alg:tu_shannon}
\Comment{Upper entropy}
\State Compute $GH$ via Algorithm~\ref{alg:eu_hartley} \Comment{Generalized Hartley}
\State \Return $S^\ast - GH$
\end{algorithmic}
\end{algorithm}

\subsection*{C.3 \quad Epistemic Uncertainty}

\begin{algorithm}[H]
\caption{Epistemic Uncertainty --- Ours ($\on{EU}_{\cF_{\mathrm{TV}}}$)}
\label{alg:eu_ours}
\begin{algorithmic}[1]
\Require $\mathcal G_Q$
\State $d_{\max} \gets 0$
\For{$j = 1, \dots, M-1$} \Comment{Upper-triangular pairs}
    \For{$i = j+1, \dots, M$}
        \State $d \gets \sum_{y=1}^{K} |p^{(j)}(\{y\}) - p^{(i)}(\{y\})|$ \Comment{$l_1$ distance}
        \State $d_{\max} \gets \max(d_{\max},\; d)$
    \EndFor
\EndFor
\State \Return $\tfrac{1}{4}\, d_{\max}$
\end{algorithmic}
\end{algorithm}

\begin{algorithm}[H]
\caption{Epistemic Uncertainty --- Entropy ($S^* - S_*$)}
\label{alg:eu_shannon}
\begin{algorithmic}[1]
\Require $\mathcal G_Q$
\State Compute $S^\ast$ via Algorithm~\ref{alg:tu_shannon}
\Comment{Upper entropy}
\State Compute $S_\ast$ via Algorithm~\ref{alg:au_shannon}
\Comment{Lower entropy}
\State \Return $S^\ast - S_\ast$ 
\end{algorithmic}
\end{algorithm}

\begin{algorithm}[H]
\caption{Epistemic Uncertainty --- Hartley ($GH$)}
\label{alg:eu_hartley}
\begin{algorithmic}[1]
\Require Lower probability of $\cQ$: $\underline{p}_{\cQ}(A)$ for all $A \subseteq \cY$
\For{each $\emptyset\neq A\subseteq\mathcal Y$} \Comment{M\"{o}bius transform of $\underline{p}_{\cQ}$}
\State $m(A) \gets \sum_{B \subseteq A} (-1)^{|A \setminus B|} \; \underline{p}_{\cQ}(B)$
\EndFor
\State \Return $\sum_{\emptyset\neq A\subseteq\mathcal Y} m(A)\log_2 |A|$
\end{algorithmic}
\end{algorithm}

\clearpage
  
\section{Experimental Setup}\label{app:exp-setup}

The experimental setup follows that of  \cite{chau2025integral}. We construct the credal set from an ensemble of 10 classifiers, and predictions are obtained by voting among the models and breaking ties with the average probabilities. In Figure \ref{fig:ar_cifar}, each classifier generates predictions over 10 test data batches, which are assigned uncertainty values by each measure, and Accuracy Rejection (AR) curves illustrate the measures’ performance. In AR curves, predictions with the highest uncertainty are progressively rejected, and the accuracy on the remaining samples is recorded. By plotting the accuracy against the fraction of rejected examples, we visualize how well the uncertainty measure separates easy (high-confidence) from hard (low-confidence) predictions. A steeper curve indicates a more informative uncertainty measure, as rejecting uncertain examples leads to a faster increase in accuracy. Results are averaged over batches to provide a robust evaluation. The experimental setup runs with AR curves discretized into 30 equal-width bins.

Beyond visual comparison, AR performance is commonly summarized through the Area Under the Curve (AUC), which provides a compact measure of overall ranking quality. However, its expressiveness is inherently limited, as it fails to account for the shape of the curve: two AR curves may yield identical areas while exhibiting different behaviors. Since, by construction, accuracy should not decrease as increasingly certain samples are retained, we complement the AUC with a shape-sensitive indicator of monotonicity. Specifically, we define the Monotonicity Ratio (MR) as the proportion of bins in which the AR curve does not decrease. Intuitively, high MR indicates that rejecting uncertain samples never harms accuracy.

\paragraph{Monotonicity Ratio (MR).}
Let $\mathbf{s} = (s_1, \dots, s_B)$ denote an AR curve evaluated over $B$ bins. 
We define the successive differences as
\[
\Delta_i = s_{i+1} - s_i, \quad i = 1, \dots, B-1.
\]

The \textit{Monotonicity Ratio} (MR) is defined as
\begin{equation}
\text{MR} = \frac{1}{B-1} \sum_{i=1}^{B-1} \mathbbm{1}_{\{\Delta_i \ge 0\}},
\end{equation}

where $\mathbbm{1}_{\{\cdot\}}$ denotes the indicator function. 

Table~\ref{tab:overall_results} complements the visual analysis by reporting average metrics across all considered datasets. We next describe the datasets, models, and computational resources used in the experiments.

\subsection{Datasets}

The experiments in this work are conducted on the \dataset{CIFAR-10} and \dataset{CIFAR-100} image datasets, as well as on tabular classification datasets from the \dataset{KEEL} repository. In Appendix \ref{app:add-results}, we use the Fashion-MNIST and SVHN datasets from PyTorch.

\datasetlabel{CIFAR datasets}
The \dataset{CIFAR} datasets, comprising \dataset{CIFAR-10} and \dataset{CIFAR-100} \cite{krizhevsky2009learning}, are widely used benchmarks in computer vision for evaluating image classification algorithms. Both datasets contain color images of size 32×32 pixels, split into 50,000 training images and 10,000 test images. The small image size and relatively limited number of samples per class make both datasets particularly challenging. \dataset{CIFAR-10} comprises 10 broad object classes, whereas \dataset{CIFAR-100} includes 100 more fine-grained classes organized into 20 superclasses.

\datasetlabel{KEEL repository}
Knowledge Extraction based on Evolutionary Learning (KEEL) \cite{AlcalFdez2011KEELDS} is an open-source (GPLv3) Java software tool maintained by the SCI2S research group at the University of Granada \footnote{\url{https://sci2s.ugr.es/keel/datasets.php}}. It provides a comprehensive platform for a wide range of knowledge discovery and data mining tasks. Additionally, it includes a dataset repository covering diverse learning paradigms such as supervised classification, regression, time series analysis, and unsupervised learning, including specialized types such as imbalanced datasets, multi-instance, multi-label, semi-supervised, and low-quality data. Table \ref{tab:KEEL_datasets} reports the 64 datasets that we considered for the work. A problem with more than ten classes is considered a high-class setting. This results in a total of 59 datasets with a low number of classes ($K \leq 10$) and 5 with a high number of classes ($K > 10$). For each dataset, the experiment is repeated over 10 random stratified splits of the data (batches), with models trained and validated on 70\% of the samples and the remaining 30\% used to evaluate the measures.
 
\begin{table}[hbt]
\centering

\caption{
Summary of the 64 datasets from the
KEEL repository, grouped by number of classes:
59 with $K \leq 10$ and 5 with $K > 10$. Preprocessing removed instances with missing values and numerically encoded categorical attributes.
}
\setlength{\tabcolsep}{4pt}
\renewcommand{\arraystretch}{1.1}
\begin{tabularx}{\linewidth}{l Y c c}
\toprule
\textbf{Category} & \textbf{Datasets} & \textbf{Samples (range)} & \textbf{Features (range)} \\
\midrule
\textbf{$K \leq 10$ (59 datasets)} &
adult; appendicitis; australian; automobile; balance; banana; bands; breast; bupa; car; chess; cleveland; coil2000; connect-4; contraceptive; crx; dermatology; fars; flare; german; glass; haberman; hayes-roth; heart; hepatitis; housevotes; ionosphere; iris; led7digit; magic; mammographic; marketing; monk-2; newthyroid; optdigits; page-blocks; penbased; phoneme; pima; ring; saheart; satimage; segment; shuttle; sonar; spambase; spectfheart; splice; tae; thyroid; tic-tac-toe; titanic; twonorm; vehicle; wdbc; wine; wisconsin; yeast; zoo&
80--100{,}968 & 2--85 \\
\textbf{$K > 10$ (5 datasets)} &
kr-vs-k; letter; movement-libras; texture; vowel &
360--28,056  & 6--90 \\
\bottomrule
\end{tabularx}

\label{tab:KEEL_datasets}
\end{table}

\datasetlabel{Other Vision datasets}
PyTorch provides an open-access repository of computer vision datasets with predefined training and test splits.\footnote{\url{https://docs.pytorch.org/vision/0.12/datasets.html}} In the experiments reported in Appendix~\ref{app:add-results}, we consider the Fashion-MNIST and SVHN datasets.
Fashion-MNIST is a more challenging alternative to the original MNIST dataset, consisting of 60,000 training samples and 10,000 test samples. Images are grayscale with a resolution of $28 \times 28$ pixels and are categorized into 10 classes corresponding to different types of clothing.
SVHN (Street View House Numbers) contains color images of house numbers extracted from Google Street View. Images have a resolution of $32 \times 32$ pixels, and the dataset includes 26,032 test samples spanning 10 digit classes.

\subsection{Models}

We employ pretrained neural network models for the \dataset{CIFAR-10} and \dataset{CIFAR-100} datasets, available at https://github.com/chenyaofo/pytorch-cifar-models. These models differ in size and architecture, introducing diversity into the ensemble.

For the tabular KEEL datasets (Table~\ref{tab:overall_results}), 10 random forest models are trained per dataset with randomly selected hyperparameters, following the approach in \cite{chau2025integral}. 

In Appendix~\ref{app:add-results}, we adopt a Vision Transformer (ViT-Base/16) model with 86 million parameters, pretrained on ImageNet-21K.\footnote{\url{https://huggingface.co/google/vit-base-patch16-224}}
All input images are resized to $224 \times 224 $ pixels and normalized using the standard preprocessing pipeline.

\subsection{Compute Resources}
The runtimes reported in Table~\ref{tab:overall_results} were obtained using an AMD Ryzen 7 5800X CPU (8 cores) running at 4.20 GHz.
The additional experiment presented in Appendix~\ref{app:add-results} was conducted on a proprietary computing cluster equipped with four NVIDIA A100 GPUs. Inference for the CIFAR models can be performed on a standard desktop GPU.

\clearpage
\section{Additional Experimental Results}\label{app:add-results}

This appendix extends the empirical validation of Section~\ref{sec:exp}. We evaluate the proposed measures in a selective prediction setting using Vision Transformer models, a state-of-the-art architecture for image classification, together with a credal predictor based on the relative-likelihood method of \citet{lohr2026credal}. Beyond performance validation, we reproduce and analyze in detail the pathological case identified in Section~\ref{sec:exp}.

\subsection{Credal Set Construction via Relative Likelihood}
Let $\{(x_i,y_i)\}_{i=1}^N$ be a dataset formed by i.i.d. samples. The likelihood of a predictor $p$ is defined as
\begin{equation}
    L(p)=\prod_{i=1}^N p(y_i\mid x_i).
\end{equation}
Let $p^\star$ denote the empirical maximum likelihood estimator (MLE) over the considered model class. The relative likelihood of a predictor $p$ is then given by
\begin{equation}
    R(p)=\frac{L(p)}{L(p^\star)}.
\end{equation}
The relative-likelihood approach of \citet{lohr2026credal} constructs a credal set by selecting all predictors whose relative likelihood exceeds a threshold $\alpha$. For experimental purposes (see Section~\ref{sec:path_case}), we discretise the relative-likelihood scale using a finite set
\[
\mathcal{A}=\{0.1,0.2,\ldots,1.0\}.
\]
For each $\alpha \in \mathcal{A}$, we select a predictor $p^\alpha$ such that $R(p^\alpha)\geq \alpha$. The resulting credal predictor at an input $x$ is defined as the convex hull
\begin{equation}
    \cQ_x = \mathrm{conv}
    \left\{p^\alpha(\cdot \mid x) : \alpha \in \mathcal{A} \right\}.
\end{equation}
In this way, even when using the same model architecture, the credal predictor reflects imprecision arising from a set of predictors corresponding to different states of knowledge. Each predictor is initialized using TOBIAS~\citep{lohr2026credal}.

Figure~\ref{fig:ar_hist_mle} shows that the proposed measures are again competitive with the baselines on both \dataset{Fashion-MNIST} and \dataset{SVHN}. In the entropy and Hartley frameworks, the component derived from the additive decomposition tends to underperform, whereas all three components of our framework yield monotonic AR curves with slightly higher AUC values overall.

\subsection{Pathological Case} \label{sec:path_case}

We now augment the ensemble with an additional member $p^{0}$. Since this threshold imposes no constraint on the likelihood, $p^{0}$ is free to be any model regardless of its fit to the data. We set $p^{0}$ to predict a fixed class $y^*$ for every input $x$, i.e., $p^{0}(\cdot \mid x) = \delta_{y^*}$. This model produces predictions entirely disconnected from the training data, corrupting the credal set and, consequently, any uncertainty quantification derived from it. As observed in Figure~\ref{fig:ar_hist_mle_0}, the proposed measures appear more sensitive to this pathological case than the baselines. We next analyze the reason behind this behavior.

When a Dirac measure $\delta_{y^*}$ belongs to $\cQ$, we have $\underline{p}_{\cQ}(\{y\}) = 0$ for all $y \neq y^*$, since $\delta_{y^*}(\{y\}) = 0$ for $y \neq y^*$. The closed-form expression in Proposition ~\eqref{prop:closedforms} then reduces to $TU_{\cF_{\mathrm{TV}}}(\cQ) = 1 - \underline{p}_{\cQ}(\{y^*\})$.

The total uncertainty measure becomes anchored entirely to the class $y^*$, regardless of whether $y^*$ is the correct label. Even when the majority of plausible models in $\cQ$ agree on a different class with high confidence, the single Dirac measure suffices to redirect the reference vertex from the consensus class to $y^*$, distorting the semantic interpretation of the measure. A similar effect propagates to the epistemic uncertainty. The maximal diameter of $\cQ$ becomes dominated by the pair $(\delta_{y^*}, p)$ for whichever $p \in \cQ$ is furthest from $\delta_{y^*}$. 
This explains the inverted AR curves in Figure~\ref{fig:ar_hist_mle_0}: since the models with higher $\alpha$ assign near zero probability to $y^*$ for most instances, $\underline{p}_{\cQ}(\{y^*\}) \approx 0$ and $TU, 2 \times EU \approx 1$ almost everywhere, as the cumulative distributions confirm. The few instances with slightly lower uncertainty (e.g., $TU,2 \times EU \approx 0.99$) are those where high-likelihood models place some residual mass on $y^*$---paradoxically indicating lower model confidence, yet ranked as less uncertain.

\begin{figure}[htb]
    \centering
    \caption{ Selective prediction for credal sets constructed via relative likelihood, on \dataset{FASHION-MNIST} and \dataset{SVHN}. \textbf{Left panels:} Accuracy--rejection (AR) curves for (a) total uncertainty, (b) aleatoric uncertainty, and (c) epistemic uncertainty. Area Under the Curve (AUC, $\uparrow$ higher is better) is reported in each legend. \textbf{Right panels:} Cumulative distribution of uncertainty scores across all test instances. The vertical dashed line marks the median uncertainty value.} 
    \includegraphics[width=0.48\textwidth]{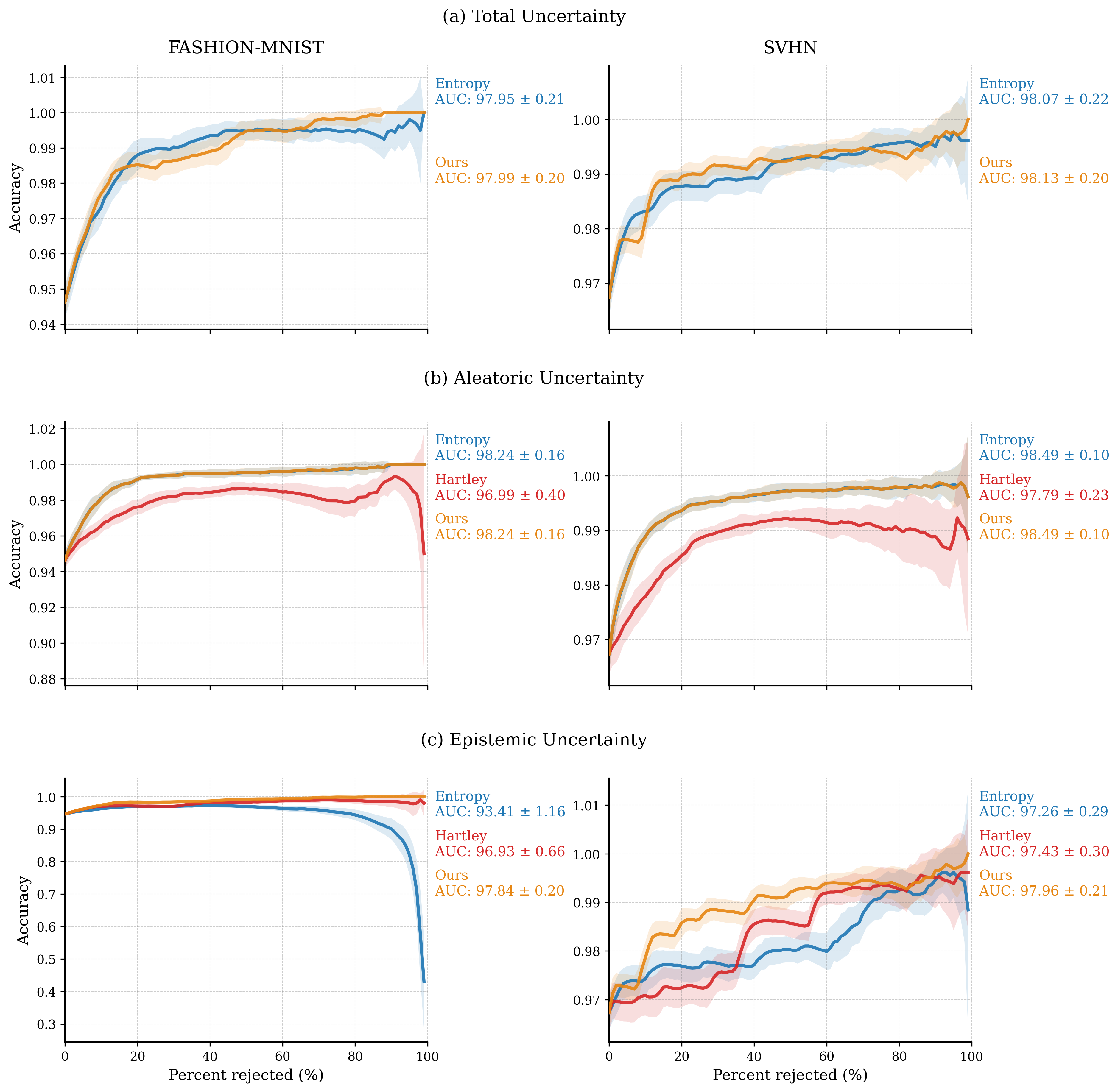}\hfill
    \includegraphics[width=0.48\textwidth]{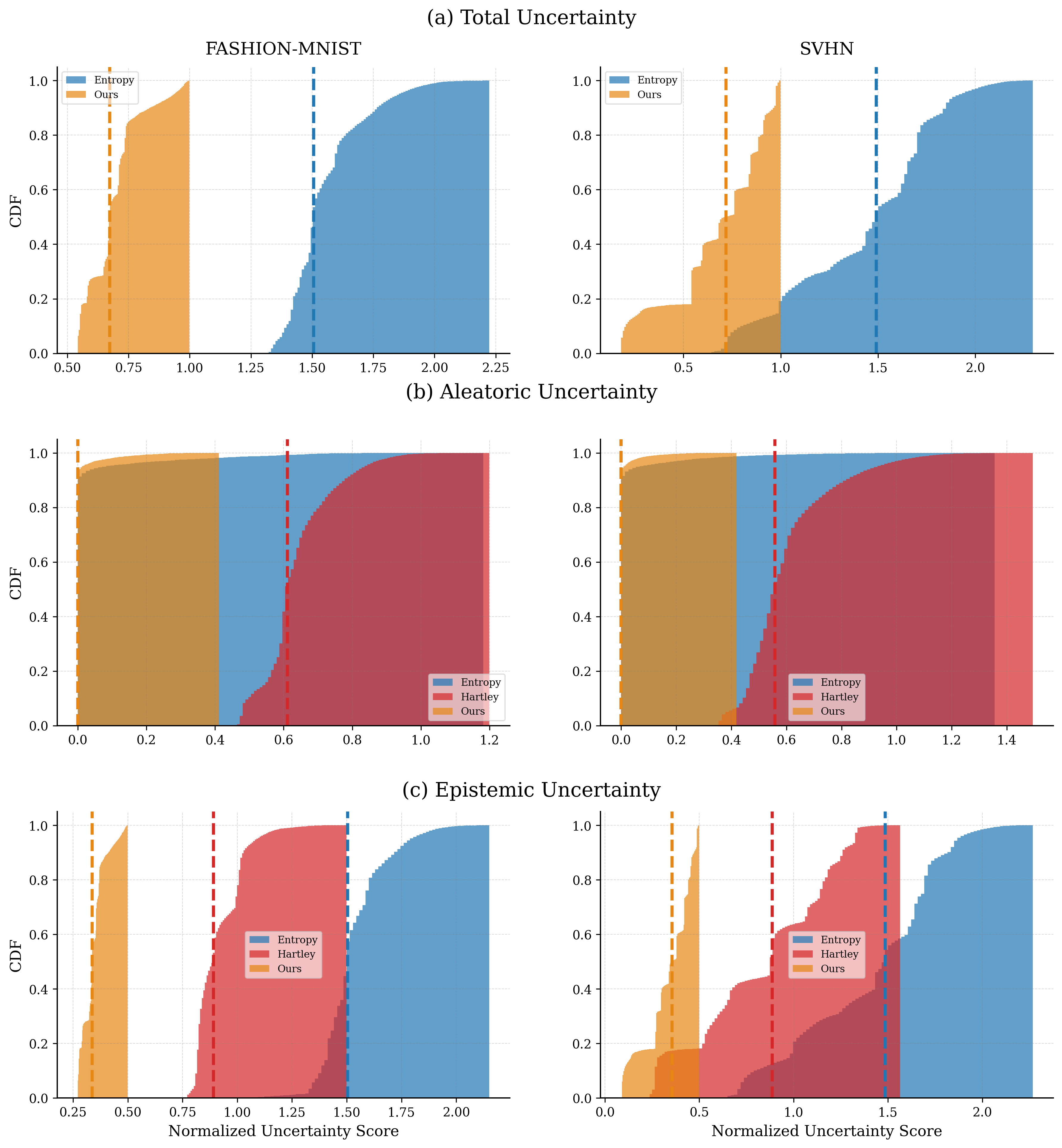}
    
    \label{fig:ar_hist_mle}
\end{figure} 

\begin{figure}[htb]
    \centering
    \caption{Selective prediction for credal sets constructed via relative likelihood, \textbf{including} $p^{0}$, on \dataset{FASHION-MNIST} and \dataset{SVHN}. \textbf{Left panels:} Accuracy--rejection (AR) curves for (a) total uncertainty, (b) aleatoric uncertainty, and (c) epistemic uncertainty. Area Under the Curve (AUC, $\uparrow$ higher is better) is reported in each legend. \textbf{Right panels:} Cumulative distribution of uncertainty scores across all test instances. The vertical dashed line marks the median uncertainty value. The inclusion of $p^{0}$---a deterministic model assigning probability 1 to a fixed class---causes pathological behavior.}
    \includegraphics[width=0.48\textwidth]{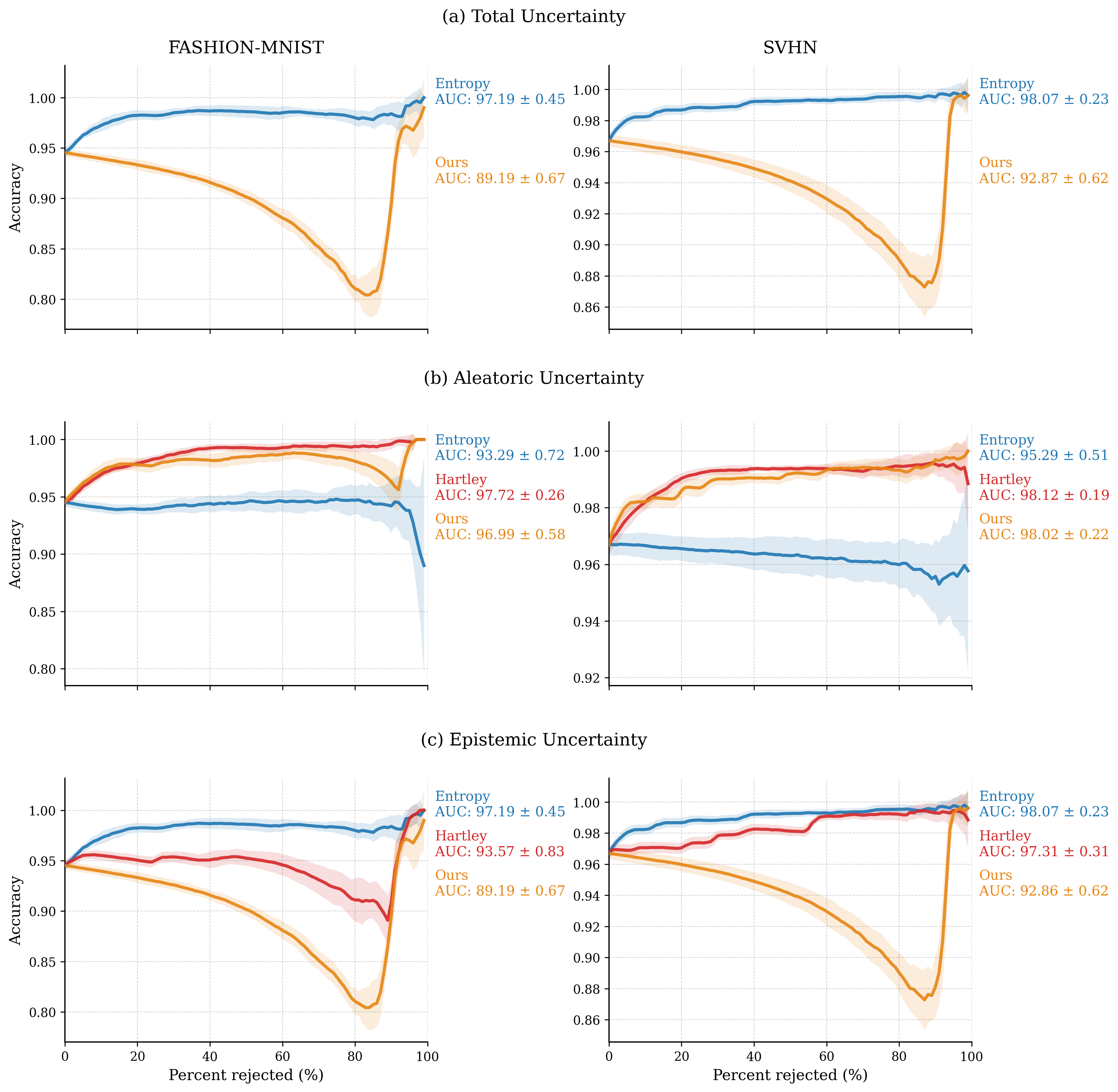}\hfill
    \includegraphics[width=0.48\textwidth]{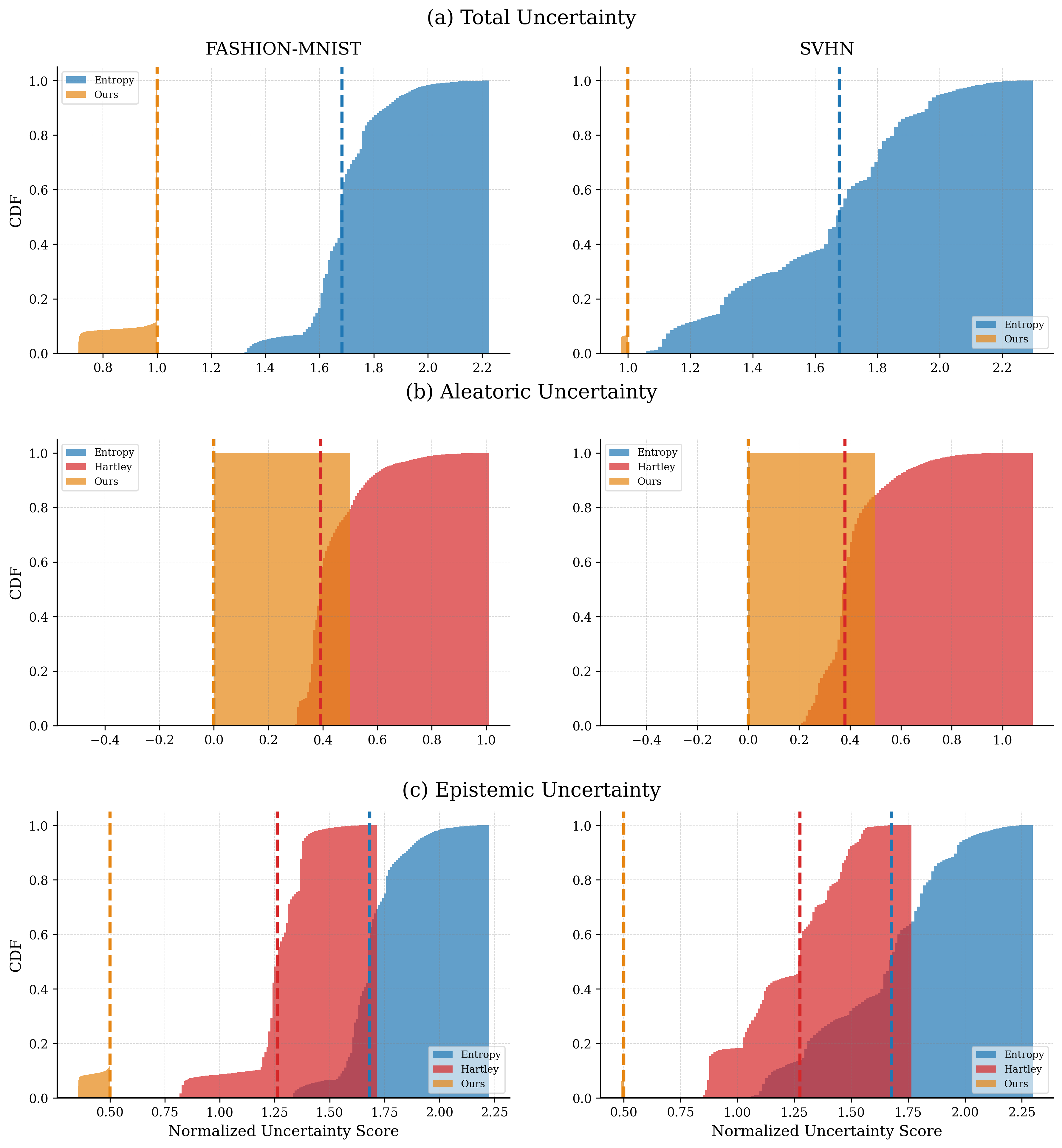}

    \label{fig:ar_hist_mle_0}
\end{figure}

\end{document}